\documentclass[10pt]{article}
\usepackage{etex}
\usepackage[T1]{fontenc}
\usepackage[utf8]{inputenc}
\usepackage{authblk}
 \usepackage[authoryear,round,longnamesfirst]{natbib}

\usepackage{geometry}
 \geometry{
 left=1in,
 right=1in,
 top=1in,
 bottom=1in
 }

\usepackage{graphicx,color,xcolor}
\definecolor{cornellred}{rgb}{0.7, 0.11, 0.11}
\usepackage{hyperref}
\hypersetup{
    colorlinks = true,
    linkcolor=cornellred,
    citecolor=blue,
    linkbordercolor = {white}
}
\usepackage{fancybox}

\usepackage{array,float}
\usepackage{amstext,amssymb,amsmath}

\usepackage{mathrsfs}
\usepackage{dsfont}
\usepackage[noend]{algorithmic}
\usepackage{pstricks, pst-tree, pst-node}
\usepackage{enumitem}
\usepackage{mathtools}

\usepackage{romannum}
\AtBeginDocument{\pagenumbering{arabic}}

\usepackage{hyperref}
\hypersetup{
     colorlinks   = true,
     citecolor    = blue
}
\usepackage{tocloft}

\hypersetup{linkcolor=blue}

\usepackage{amsfonts}
\usepackage{amsmath}
\usepackage{amssymb}
\usepackage{amsthm}
\usepackage{booktabs}
\usepackage[T1]{fontenc}
\usepackage{hyperref}
\usepackage[utf8]{inputenc}
\usepackage{microtype}
\usepackage{nicefrac}
\usepackage{tikz}
\usepackage{url}
\usepackage{subcaption}
\usepackage{colortbl}

\newlength\figureheight 
\newlength\figurewidth 
\usetikzlibrary{calc}

\newtheorem{theorem}{Theorem}[section]

\newtheorem{remark}{Remark}

\newtheoremstyle{restate}{}{}{\itshape}{}{\bfseries}{~(restated).}{.5em}{\thmnote{#3}}
\theoremstyle{restate}

\newcommand\K{\mathcal{K}}
\newcommand\RR{\mathbb{R}}
\newcommand\bfw{\mathbf{w}}
\newcommand\bfx{\mathbf{x}}
\newcommand\np{{\mathbb{NP}}}
\newcommand\bfzero{\mathbf{0}}

\newcommand\pspace{{$\mathbb{PSPACE}$}}
\newcommand\PSPACE{{\mathbb{PSPACE}}}
\newcommand{\C}{{\mathcal{C}}}

\newcommand{\cpath}{$\mathcal{C}$\textsc{-Path}}

\newcommand{\memory}[0]{\{1, \ldots, d\}}
\newcommand{\binary}[0]{\{ -1, +1 \}}
\newcommand{\ubinary}[0]{\{ -1, 0, +1\}}
\newcommand{\rbinary}[1]{\{ -\epsilon_{#1}, +\epsilon_{#1} \}}
\newcommand{\rubinary}[1]{\{ -\epsilon_{#1}, 0, +\epsilon_{#1} \}}
\newcommand{\func}[2]{\texttt{#1}$\left( #2 \right)$}

\newcommand{\fnot}[1]{\func{not}{#1}}
\newcommand{\reset}[1]{\func{reset}{#1}}
\newcommand{\cpy}[2]{\func{copy}{#1, #2}}
\newcommand{\dnand}[3]{\func{destructive\_nand}{#1, #2, #3}}
\newcommand{\inputF}[1]{\func{set\_false\_if\_unset}{#1}}
\newcommand{\setiftrue}[2]{\func{copy\_if\_true}{#1, #2}}

\newcommand{\rreset}[2]{\func{reset}{#1, #2}}
\newcommand{\rcopy}[4]{\func{copy2}{#1, #2, #3, #4}}
\newcommand{\rdnand}[5]{\func{d\_nand}{#1, #2, #3, #4, #5}}
\newcommand{\rinputF}[2]{\func{set\_false\_if\_unset}{#1, #2}}
\newcommand{\rsetiftrue}[3]{\func{copy\_if\_true}{#1, #2, #3}}

\newcommand{\norm}[2]{\left\lVert #1 \right\rVert_{#2}}

\usepackage{color}

\newcommand{\nuke}[1]{}
\begin{document}
\title{On the Computational Power of Online Gradient Descent}

\author[$\dagger$]{Vaggos Chatziafratis}
\author[$\star$]{Tim Roughgarden}
\author[$\ddagger$]{Joshua R. Wang}
\affil[$\dagger$]{Department of Computer Science,  Stanford University}
\affil[$\star$]{Department of Computer Science,  Columbia University}
\affil[$\ddagger$]{Google Research, Mountain View}

%

\maketitle

\begin{abstract}
\normalsize
We prove that the evolution of weight vectors in online gradient
descent can encode arbitrary polynomial-space computations, even in
very simple learning settings.
Our results imply that, under weak complexity-theoretic
assumptions, it is impossible to reason efficiently about the
fine-grained behavior of online gradient descent.
\end{abstract}

\section{Introduction}
\label{sec:intro}
In {\em online convex optimization (OCO)}, an online algorithm picks a
sequence of points $\bfw^1,\bfw^2,...$ from a compact convex set
$\K \subseteq \RR^d$ while an adversary chooses a sequence
$f_1,f_2,...$ of convex cost functions (from $\K$ to $\RR$).  The
online algorithm can choose $\bfw_t$ based on the previously-seen
$f^1, ...,f^{t-1}$ but not later functions; the adversary can choose
$f^t$ based on $\bfw^1, ...,\bfw^t$.  The algorithm incurs a
cost of $f^t(\bfw^t)$ at time~$t$.
Canonically, in a machine learning context, $\K$ is the set of
allowable weight vectors or hypotheses (e.g., vectors with bounded
$\ell_2$-norm), and $f^t$ is induced by a data point $\bfx^t$, a
label $y^t$, and a loss function $\ell$ (e.g., absolute, hinge, or
squared loss) via $f^t(\bfw^t) = \ell(\bfw^t, (\bfx^t, y^t))$.


One of the most well-studied algorithms for OCO is {\em online gradient
  descent (OGD)}, which always chooses the point $\bfw^{t+1} := \bfw^{t}
- \eta \cdot \nabla f^{t}(\bfw^t)$~\citep{zinkevich2003online}, projecting
back to $\K$ if necessary.  This algorithm enjoys good guarantees
for OCO problems, such as vanishing regret (see e.g.~\cite{hazanbook}).



The main message of this paper is:
\begin{itemize}
\centering
\item [] \textit{OGD captures arbitrary polynomial-space computations,
    even in very simple settings.}
\end{itemize}
For example, this result is true for binary classification using
soft-margin support vector machines (SVMs) or neural networks with one
hidden layer, ReLU activations, and the squared loss function.  (For
even simpler models, like ordinary linear least squares, such a result appears
impossible; see \hyperref[apx:quadratic]{Appendix~\ref{apx:quadratic}}.)

A bit more precisely: for every polynomial-space computation, there is
a sequence of data points $(\bfx^1,y^1),\ldots,(\bfx^T,y^T)$ that have
polynomial bit complexity such that, if these data points are fed to
OGD (specialized to one of the aforementioned settings)
in this order over and over again, the consequent sequence of weight
vectors simulates the given computation.  \hyperref[f:toy]{Figure~\ref{f:toy}} gives
a cartoon view of what such a simulation looks like.\footnote{Our
  actual simulation in \hyperref[sec:reduction]{Section~\ref{sec:reduction}} and \hyperref[sec:api]{Section~\ref{sec:api}} is similar in spirit to but more complicated than the picture in \hyperref[f:toy]{Figure~\ref{f:toy}}.  For
  example, we use a constant number of OGD updates to simulate each
  circuit gate (not just one), and each weight can take on up to a
  polynomial number of different values.}

\begin{figure}

\centering
\begin{subfigure}{.44\textwidth}
  \begin{center}
\centerline{\includegraphics[scale=0.30]{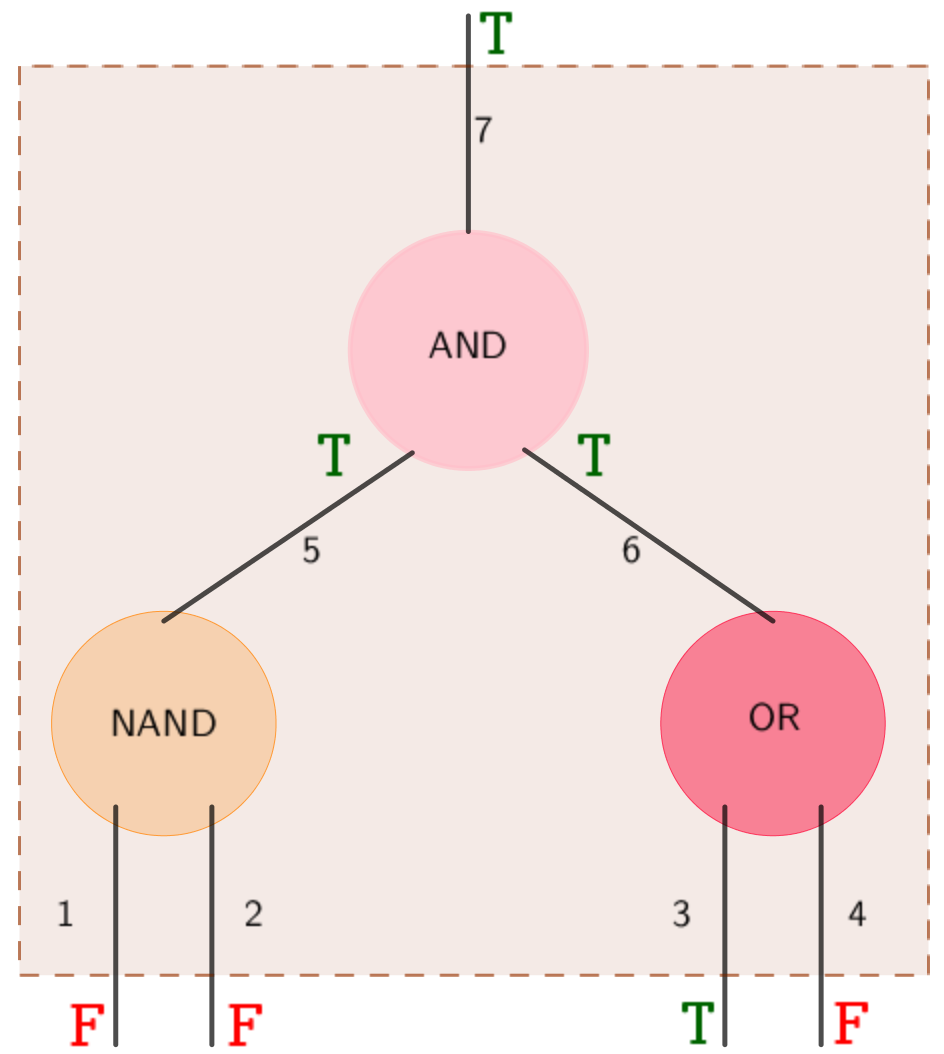}} 
\end{center}\end{subfigure}%
\begin{subfigure}{.5\textwidth}
  \begin{center}
  \vspace{1cm}
\centerline{\includegraphics[scale=0.41]{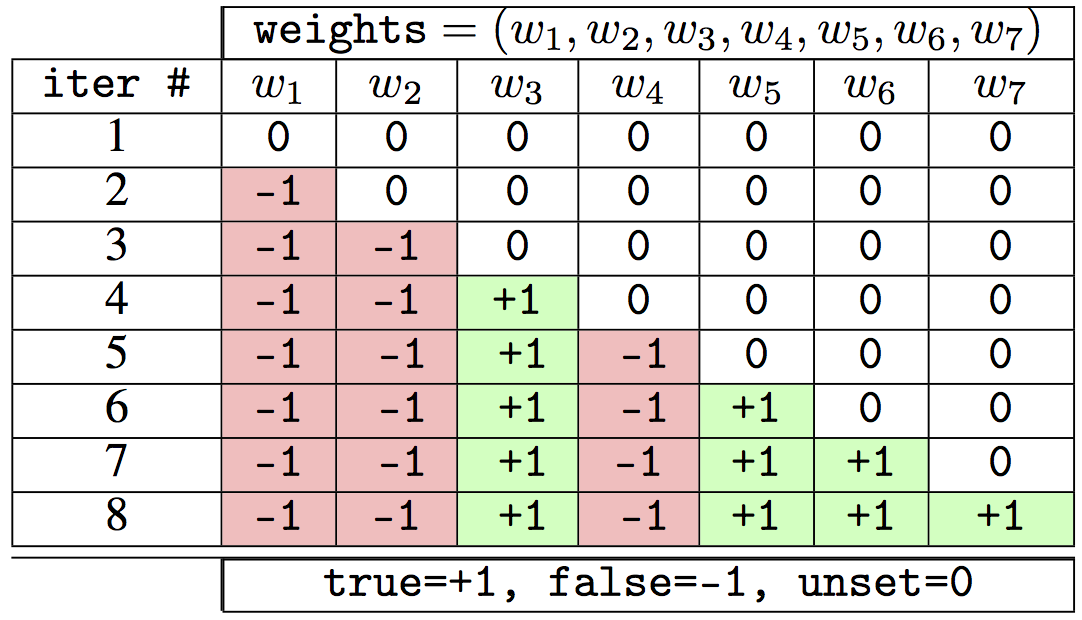}} 
\vspace{0.9cm}

\end{center}
\end{subfigure}
\caption{Cartoon view of simulating a computation using a sequence
    of weight vectors.  On the left, the evaluation of a Boolean
    circuit on a specific input (with ``T'' and ``F'' indicating which
    inputs and gates evaluate to true and false, respectively).  On the right, a corresponding
    sequence of weight vectors (with updates triggered by a
    carefully chosen data set), with each vector evaluating one more
    gate of the circuit than the previous one.  Weights of $+1, -1$, and
    $0$ indicate that an input has been assigned true, has been assigned
    false, or has not yet been assigned a value, respectively.}
\label{f:toy}
\end{figure}





Our simulation implies that, under weak complexity-theoretic
assumptions, it is impossible to reason efficiently about the
fine-grained behavior of OGD.  For example, the following problem is
\pspace-hard\footnote{In fact, for the case where we are promised that
the weights are bounded and only require polynomial bits of precision
(they are so in our constructions), the problem is \pspace-complete, because
we can store the weights in our polynomially-sized memory and can keep a
polynomially-sized timer to check whether we are cycling.
}: given a sequence $(\bfx^1,y^1),\ldots,(\bfx^T,y^T)$
of data points, to be fed into OGD over and over again (in the same
order), with initial weights $\bfw^1 = \bfzero$, does any
weight vector~$\bfw^t$ produced by OGD (with soft-margin SVM updates)
have a positive first coordinate?\footnote{\pspace{} is the set of
  decision problems decidable by a Turing machine that uses space at
  most polynomial in the input size, and it contains problems that are
  believed to
  be very hard (much harder than $\np$-complete).  For example, the
  problem of deciding which player has a winning strategy in chess
  (for a suitable asymptotic generalization of chess) belongs to (and
  is complete for) \pspace~(\cite{Storer83}).}

In the case of soft-margin SVMs, for
the instances produced by our reduction, the optimal point in
hindsight converges over time to a single point $\bfw^*$ (the
regularized ERM solution for the initial data set), and the well-known
regret guarantees for OGD imply that its iterates grow close to
$\bfw^*$ (in objective function value and, by strong convexity, in
distance as well).  
Viewed from afar, OGD is nearly converging; viewed up close, it
exhibits astonishing complexity.



Our results have similar implications for a common-in-practice variant
of stochastic gradient descent (SGD), where every epoch performs a
single pass over the data points, in a fixed but arbitrary order.  
Our work implies that this variant of SGD can also simulate arbitrary
\pspace{} computations (when the data points and their ordering can be
chosen adversarially).

\subsection{Related Work}


There are a number of excellent sources for further background on OCO,
OGD, and SVMs; see e.g.~\cite{hazanbook,SSBDbook}.  We use only
classical concepts from complexity theory, covered e.g.~in~\cite{sipserbook}.

There is a long history of \pspace-completeness results for reasoning
about iterative algorithms.  For example,
\pspace-completeness results were proved for computing the final
outcome of local search~\citep{JPY88} and other path-following-type
algorithms~\citep{GPS13}.
For a more recent example that concerns finding a limit cycle of
certain dynamical systems, see~\cite{vishnoi}.

This paper is most closely related to a line of work showing that
certain widely used algorithms inadvertently solve much harder
problems than what they were originally designed for.  For example,
\cite{adler},
\cite{DS15},
and \cite{FS16}
show how to efficiently embed an instance of a hard problem into a
linear program
so that the trajectory of the simplex method immediately
reveals the answer to the instance.
\cite{RW16} proved an
analogous \pspace-completeness result for Lloyd's $k$-means
algorithm.



More distantly related are previous works that treat stochastic
gradient descent as a dynamical system and then show that the system
is complex in some sense.  Examples include~\cite{chaos},
who provide empirical evidence of chaotic behavior, and~\cite{CS2018},
who show that, for DNN training, SGD can converge to stable limit
cycles.  We are not aware of any previous works that take a
computational complexity-based approach to the problem.


\section{Preliminaries}
\label{sec:prelim}
\subsection{Soft-Margin SVMs}


We begin with the following special case of OCO, corresponding to soft-margin
support vector machines (SVMs) under a hinge loss.\footnote{Neural
networks with ReLU activations and squared loss are discussed in \hyperref[apx:additional]{Appendix~\ref{apx:additional}}.}
For some fixed regularization parameter $\lambda$, every cost function $f^t$
will have the form 
\[
\ell_{hinge}(\bfw^t,(\bfx^t,y^t)) + \tfrac{\lambda}{2} \|\bfw^t\|_2^2
\]
for some data point $\bfx^t \in \RR^d$ and label $y^t \in \{-1,+1\}$,
where the hinge loss is defined as $\ell_{hinge}(\bfw^t,(\bfx^t,y^t))
= \max \{ 0, 1 - y^t (\bfw^t \cdot \bfx^t) \}$.\footnote{For
  simplicity, we have omitted the bias term here; see also
  \hyperref[ss:bias]{Section~\ref{ss:bias}}.}
In this case, the weight updates in OGD have a special form (where
$\eta$ is the step size):
\[
\bfw^{t+1} = (1-\lambda\eta) \bfw^{t} + \eta \cdot \left\{ 
\begin{array}{cl}
y^t(\bfx^t) & \text{if $y^t (\bfw^t \cdot \bfx^t) < 1$}\\
0 & \text{if $y^t (\bfw^t \cdot \bfx^t) > 1$.}
\end{array}
\right.
\]

\subsection{Complexity Theory Background}


A decision problem $L \subseteq \{0,1\}^*$ is in the class \pspace{} if
and only if there exists a Turing machine $M$ and a polynomial
function $p(\cdot)$ such that, for every $n$-bit string $z$, $M$
correctly decides whether or not $z$ is in $L$ while using space at
most $p(n)$.

\pspace{} is obviously at least as big as $\mathbb{P}$, the class of
polynomial-time-decidable decision problems (it takes $s$ operations
to use up $s$ tape cells).  It also contains every problem in $\np$
(just try all possible polynomial-length witnesses, reusing space for
each computation), $co$-$\mathbb{NP}$ (for the same reason), the
entire polynomial hierarchy, and more.  A problem $L$ is {\em
\pspace-hard} if every problem in \pspace{} polynomial-time
reduces to it, and {\em \pspace-complete} if additionally $L$ belongs
to \pspace.  While the current state of knowledge does not rule out
$\mathbb{P = PSPACE}$ (which would be even more surprising than
$\mathbb{P=NP}$), the widespread belief is that $\mathbb{PSPACE}$
contains many problems that are intrinsically computationally
difficult (like the aforementioned chess example).  Thus a problem
that is complete (or hard) for \pspace{} would seem to be very hard
indeed.

Our main reduction is from the \cpath{} problem.  In this problem, the
input is (an encoding of) a Boolean circuit~$\mathcal{C}$ with $n$ inputs, $n$
outputs, and gates of fan-in $2$; and a target $n$-bit string $s^*$.  The goal is
to decide whether or not the repeated application of $\mathcal{C}$ to the
all-false string ever produces the output~$s^*$.  This problem is
\pspace-complete (see \cite{adler}), and in this sense every
polynomial-space computation is just a thinly disguised instance of
\cpath{}.

\section{$\PSPACE$-Hardness Reduction}
\label{sec:reduction}

\newcommand{\specx}{\bot}
\newcommand{\specy}{\Box}
\newcommand{\specz}{\Diamond}

In this section, we present our main reduction from the \cpath{} problem. Our reduction uses several types of gadgets, which are organized into an API in \hyperref[ss:apieasy]{Subsection~\ref{ss:apieasy}}.

The implementation of two gadgets is given in \hyperref[sec:api]{Section~\ref{sec:api}} and the remaining implementations can be found in \hyperref[sec:appendix]{Appendix~\ref{sec:appendix}}. After presenting the API, this section concludes by showing how the reduction can be performed using the API.

\subsection{Simplifying Assumptions}

For this section, we make a couple of simplifying assumptions to showcase the main technical ideas used in our proof. We later show how to extend the proof to remove these assumptions in \hyperref[sec:extensions]{Section~\ref{sec:extensions}}. Our simplifying assumptions are:
\begin{enumerate}
  \item[(i)] There is no bias term, i.e. $b$ is fixed to $0$.
  \item[(ii)] The learning rate $\eta$ is fixed to $1$.
  \item[(iii)] The loss function is not regularized, i.e. $\lambda = 0$.
\end{enumerate}

\subsection{API for Reduction Gadgets}
\label{ss:apieasy}

We use a number of gadgets to encode an instance of \cpath{} into
training examples for OGD. The high level plan is to use the weights
$\bfw^t$ to encode boolean values in our circuit. A weight of $+1$
will represent a true bit, while a weight of $-1$ will represent a
false bit. Additionally, we use a weight of $0$ to represent a bit
that we have not yet computed (which we refer to as ``unset'').
For example, our simplest gadget is \reset{i_1}, which
takes the index of a weight that is set to either +1 or -1, 
and provides a sequence of training examples that causes that weight
to update to 0 (thus unsetting the bit). Our next simplest gadget
is \fnot{i_1}, which takes the index of a weight that is set to either
+1 or -1, and provides a sequence of training examples that causes the
weight to update to -1 or +1, respectively (thus setting it to the not of
itself). Note that our main reduction does not use the \texttt{not} gadget
directly, but it serves as a subgadget for our other gadgets and is also
useful for performing other reductions.

It is well known that every $\{\pm 1\}$ Boolean circuit can be efficiently
converted into a circuit that only has NAND gates (where the output
is~$-1$ if both inputs are~$+1$, and $+1$ otherwise), and so we focus on such
circuits.
We would like a gadget that takes two true/false bits and an unset bit
and writes the NAND of the first two into the third. Unfortunately,
the nature of the weight updates makes it difficult to
implement NAND directly. As a result, we instead use two smaller gadgets that can
together be used to compute a NAND. The bulk of the work is done by
\dnand{i_1}{i_2}{i_3}, which performs the above but has the
unfortunate side-effect of unsetting the first two bits. As a result,
we need a way to increase the number of copies we have of a boolean
value. The \cpy{i_1}{i_2} gadget takes a true/false bit and an unset
bit and writes the former into the latter. Taken together, we can
compute NAND by copying our two bits of interest and then using the
copies to compute the NAND.

Our next gadget allows the starting weights $\bfw^0$ to be the
all-zeroes vector. The gadget \\ \inputF{i_1} takes a weight that may
correspond to either a true/false bit or to an unset bit. If the
weight is already true/false, it does nothing. Otherwise, it takes the
unset bit and writes false into it. 

Finally, we have a simple gadget for the purpose of presenting a
concrete \pspace-hard decision problem about the OGD process. The
question we aim for is, does any weight vector $\bfw^t$ produced by
OGD (with soft-margin SVM updates) have a positive first coordinate?
Correspondingly, the \texttt{set\_if\_true} gadget takes a true/false bit and a
zero-weight coordinate (intended to be the first coordinate). If the
first bit is true, this gadget gives the zero-weight coordinate a
weight of $+1$. If the first bit is false, this gadget leaves the
zero-weight coordinate completely untouched, even in intermediate
steps between its training examples. This property is not present in
the implementation of our other gadgets, so this will be the only
gadget that we use to modify the first coordinate.

This API is formally specified in \hyperref[tab:public-api]{Table~\ref{tab:public-api}}.

\begin{table}
\caption{Public API}
\label{tab:public-api}
\centering
\begin{tabular}{@{}lll@{}}
  \toprule
  \textbf{Function} & \textbf{Precondition(s)} & \textbf{Description} \\ \midrule
  \reset{i_1} &
  $i_1 \in \memory$ &
  $w_{i_1} \leftarrow 0$ \\
  (for implementation, see \hyperref[tab:reset]{Table~\ref{tab:reset}})&
  $w_{i_1} \in \binary$ &
  \\ \midrule
  \fnot{i_1} &
  $i_1 \in \memory$ &
  $w_{i_1} \leftarrow \text{NOT}(w_{i_1})$ \\
  (for implementation, see \hyperref[tab:not]{Table~\ref{tab:not}}) &
  $w_{i_1} \in \binary$ &
  \\ \midrule
  \cpy{i_1}{i_2} &
  $i_1, i_2 \in \memory$ &
  $w_{i_2} \leftarrow w_{i_1}$ \\
  (for implementation, see  \hyperref[tab:cpy]{Table~\ref{tab:cpy}})&
  $w_{i_1} \in \binary$ &
  \\
  &
  $w_{i_2} = 0$ &
  \\ \midrule
  \dnand{i_1}{i_2}{i_3} &
  $i_1, i_2, i_3 \in \memory$ &
  $w_{i_3} \leftarrow \text{NAND}(w_{i_1}, w_{i_2})$ \\
  (for implementation, see  \hyperref[tab:dnand]{Table~\ref{tab:dnand}})&
  $w_{i_1} \in \binary$ &
  $w_{i_1} \leftarrow 0$ \\
  &
  $w_{i_2} \in \binary$ &
  $w_{i_2} \leftarrow 0$ \\
  &
  $w_{i_3} = 0$ &
  \\ \midrule
  \inputF{i_1} &
  $i_1 \in \memory$ &
  If $w_{i_1} == 0$, $w_{i_1} \leftarrow -1$ \\
  (for implementation, see  \hyperref[tab:inputF]{Table~\ref{tab:inputF}})&
  $w_{i_1} \in \ubinary$ &
  \\ \midrule
  \setiftrue{i_1}{i_2} &
  $i_1, i_2 \in \memory$ &
  If $w_{i_1} > 0$, $w_{i_2} \leftarrow +1$ \\
  (for implementation, see  \hyperref[tab:setiftrue]{Table~\ref{tab:setiftrue}})&
  $w_{i_1} \in \binary$ &
  If $w_{i_1} < 0$, $w_{i_2}$ remains at $0$ \\
  &
  $w_{i_2} = 0$ &
  (including in intermediate steps) \\
  \bottomrule
\end{tabular}
\end{table}

\subsection{Performing the Reduction using the API}

We now show how to use our API to transform an instance of the \cpath{} problem into a set of training examples for a soft-margin SVM that is being optimized by OGD.

\begin{theorem}\label{th:main}
There is a reduction which, given a circuit $\C$ and a target binary string $s^*$, produces a set of training examples for OGD (with soft-margin SVM updates) such that repeated application of $\C$ to the all-false string eventually produces the string $s^*$ if and only if OGD beginning with the all-zeroes weight vector and repeatedly fed this set of training examples (in the same order)  eventually produces a weight vector $\bfw^t$ with positive first coordinate.
\end{theorem}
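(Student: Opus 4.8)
The plan is to treat the API gadgets of Table~\ref{tab:public-api} as a small instruction set and to \emph{program} one application of the circuit $\C$ per epoch, i.e.\ per pass through the fixed list of training examples. I would reserve three disjoint blocks of coordinates: $n$ persistent \emph{state} coordinates holding the current string, a polynomial-sized block of \emph{scratch} coordinates for evaluating gates, and the distinguished first coordinate, which serves only as the final answer flag. The invariant to maintain is that at the start of the $k$-th epoch the state block holds $\C^{k-1}(\text{all-false})$, every scratch coordinate is $0$, and the first coordinate is $0$ (unless $s^*$ has already been produced). Establishing this invariant inductively, together with correct initialization from $\bfw^1=\bfzero$, is the backbone of the argument.

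For the body of a single epoch I would proceed in four stages. First, apply \texttt{set\_false\_if\_unset} to each state coordinate; on the first epoch this writes the all-false string (matching $\bfw^1=\bfzero$), and on every later epoch it is a no-op because the state already lies in $\binary$. Second, because \texttt{destructive\_nand} consumes its two inputs, I would preprocess $\C$ so that each wire value is first replicated with \texttt{copy} into as many fresh scratch coordinates as its fan-out requires; since $\C$ has polynomially many gates, only polynomially many copies are needed. Third, I would evaluate the NAND gates in topological order, each time calling \texttt{destructive\_nand} on a pair of dedicated copies, so that the $n$ output wires of $\C$ end up stored (with a spare copy each) in scratch.

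To test the stopping condition I would append a fixed NAND sub-circuit that computes the single bit $\mathrm{eq}=[\,\C(s)=s^*\,]$ from copies of the output bits: for each coordinate the matching predicate is $\mathrm{out}_i$ or $\mathrm{NOT}(\mathrm{out}_i)$ according to the corresponding bit of $s^*$, realized via $\mathrm{NOT}(x)=\mathrm{NAND}(x,x)$ and $\mathrm{AND}(a,b)=\mathrm{NOT}(\mathrm{NAND}(a,b))$, all implemented with \texttt{copy} and \texttt{destructive\_nand}. I would then call \texttt{copy\_if\_true} on $\mathrm{eq}$ and the first coordinate: by its specification the first coordinate receives $+1$ exactly when $\mathrm{eq}$ is true and is left untouched---\emph{even in intermediate steps}---when $\mathrm{eq}$ is false. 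Finally I would \texttt{reset} the old state coordinates and all spent scratch to $0$ and \texttt{copy} the saved next-state bits into the state block, restoring the invariant for epoch $k+1$.

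Correctness then follows by induction on $k$: the first coordinate becomes positive precisely at the first epoch for which $\C^{k}(\text{all-false})=s^*$, which is exactly the \cpath{} acceptance condition, and every construction is of polynomial size, giving a polynomial-time reduction. I expect the main obstacle to be the interaction between the destructive NAND and cross-epoch composition: every wire must be given exactly the right number of copies, and the scratch block (including $\mathrm{eq}$ and all consumed copies) must be returned to $0$ so that epochs chain cleanly and the invariant is genuinely reestablished. A second delicate point is that the other gadgets may transiently drive a weight positive, so the first coordinate must be written \emph{only} through \texttt{copy\_if\_true}, whose no-side-effect guarantee on a false input is what keeps the answer flag honest until $s^*$ actually appears.
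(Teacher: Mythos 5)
Your proposal is correct and follows essentially the same route as the paper: an epoch invariant on the weight vector, initialization via \texttt{set\_false\_if\_unset}, topological gate evaluation through \texttt{copy} plus \texttt{destructive\_nand}, an equality-with-$s^*$ comparator realized in NAND gates (the paper folds this into an augmented circuit $\C'$ rather than appending it separately), a single \texttt{copy\_if\_true} write to the protected first coordinate, and a reset-and-copy-back step to restore the invariant. The only cosmetic difference is that the paper makes copies just in time into two fixed scratch coordinates (exploiting that \texttt{destructive\_nand} re-zeroes them) instead of replicating each wire up front according to its fan-out.
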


\begin{proof}
  Our reduction begins by converting $\C$ into a more complex circuit
  $\C'$.  First, we assume that $\C$ has only NAND gates (see above).
  Next, we augment our circuit with an additional input/output bit,
  intended to track if the current output is $s^*$. The circuit $\C'$
  ignores its additional input bit, and its additional output bit is
  true if the original output bits are $s^*$ and false
  otherwise. These transformations keep the size of $\C'$ polynomial
  in the input/output size.
  
Let $n$ denote the input/output size of $\C'$ and let $m$ denote the
number of gates in $\C'$. Our reduction produces training examples for
an SVM with a $d$-dimensional weight vector, where $d = n + m + 3$. We denote the
first three indices for this weight vector using $\specx$, $\specy$,
and $\specz$: notably, $\specx$ denotes the first coordinate whose
weight should remain zero unless the input to the \cpath{} problem
should be accepted. We denote the next $n$ indices $1, \ldots, n$ and
associate each with an input bit. We denote the last $m$ indices $n+1,
\ldots, n+m$ and associate them with gates of $\C'$, in some
topological order.

We begin with an empty training set. Each time we call a function from
our API (which can be found in \hyperref[tab:public-api]{Table~\ref{tab:public-api}}), we append
its training examples to the \emph{end} of our training set. We now
give the construction, and then finish the proof by proving the
resulting set of training examples has the desired property. Our
construction proceeds in five phases.

In the first phase of our reduction, we set the starting input for the
\cpath{} problem. We iterate in order through $i = 1, 2, \ldots,
n$. In iteration $i$, we call \inputF{i}.
  
In the second phase of our reduction, we simulate the computation of the
circuit $\C'$. We iterate in order through
$i = n+1, n+2, \ldots, n+m$. In iteration $i$, we examine the NAND
gate in $\C'$ associated with $i$. Suppose its inputs are associated
with indices $i_1$ and $i_2$. We call \cpy{i_1}{\specy},
\cpy{i_2}{\specz}, \dnand{\specy}{\specz}{i} in that order.
  
  In the third phase of our reduction, we check if we have found $s^*$. Let the additional output bit of $\C'$ be at index $i_1$. We call \setiftrue{i_1}{\specx}.
  
  In the fourth phase of our reduction, we copy the output of the circuit back to the input. We iterate in order through $i = 1, 2, \ldots, n$. In iteration $i$, let the $i^{th}$ output bit of $\C'$ correspond to the gate associated with index $i_1$. We call \reset{i} and \cpy{i_1}{i}, in that order.
  
  In the fifth phase of our reduction, we reset the circuit for the next round of computation. We iterate in order through $i = n+1, n+2, \ldots, n+m$. In iteration $i$, we call \reset{i}.
  
  We now explain why the resulting training data has the desired property. Let's consider what OGD does in (i) the first pass over the training data and (ii) in later passes over the data. We begin with case (i). Before the first phase of our reduction, all weights are zero, corresponding to unset bits. The first phase of our reduction hence sets the weights at indices $1, \ldots, n$ to correspond to an all-false input. The second phase of our reduction then computes the appropriate output for each gate and sets it. Note that it is important we proceeded in topological order, so that the inputs of a NAND gate are set before we attempt to compute its output. The third phase of our reduction checks if we have found $s^*$, and if the $\specx$ weight gets set to a positive coordinate, this implies that $\C$ immediately produced $s^*$ when applied to the all-false string. The fourth phase of our reduction unsets the weights at indices $1, \ldots, n$ and then copies the output of $\C'$ into them. The fifth phase of our reduction then unsets the weights at indices $n+1, \ldots, n+m$.
  
  If we are continuing after this first pass, then the weights at indices $\specx$, $\specy$, $\specz$, and $n + 1, \ldots, n + m$ are unset while the weights at indices $1, \ldots, n$ are set to the next circuit input. We now analyze case (ii), assuming it also leaves the weights in this state after each pass. In the first phase of our reduction, nothing happens because the input is already set. The second through fifth phases of our reduction then proceed exactly as in case (i), computing the circuit based on this input, checking if we found $s^*$, copying the output to the input, and resetting the circuit for another round of computation. As a result, we again arrive at a state where the weights at indices $\specx$, $\specy$, $\specz$, and $n + 1, \ldots, n + m$ are unset while the weights at indices $1, \ldots, n$ are set to the next circuit input.
  
  In other words, repeatedly passing over our training data causes OGD to simulate the repeated application of $\C$, as desired. By construction, our first coordinate $\specx$ has a positive weight if and only if our simulated $\C$ computation manages to find $s^*$. This completes the proof.
\end{proof}

\begin{remark}
Although our decision question about OGD asked whether the first coordinate ever became positive, our reduction technique is flexible enough to result in many possible decision questions. For example, we might ask if OGD, after a single complete pass over the training examples, winds up producing the same weight vector $\bfw^t$ that it had produced immediately preceding the complete pass (since $\C$ may be rewired so that its only stationary point is $s^*$). As another example, with a simple modification of our \setiftrue{i_1}{i_2} gadget to place a high value into $w_{i_2}$, we could ask whether OGD ever produces a weight vector $\bfw^t$ with norm above some threshold.
\end{remark}
\section{API Implementation}
\label{sec:api}
Now that we have described at a high level how to simulate the circuit computation using OGD updates, we proceed by giving the technical details of the implementation for each gadget operation on the circuit bits: $\texttt{reset}, \texttt{not}, \texttt{copy}, \texttt{destructive\_nand}, \texttt{input\_false},\texttt{set\_if\_true}$. Note that in all of our constructions the training examples required are extremely sparse; each construction involves at most $3$ non-zero coordinates.

\subsection{Implementation of \reset{i_1}}
The \texttt{reset} gadget (see~\hyperref[tab:reset]{Table~\ref{tab:reset}}) takes as input one index $i_1$ and resets the corresponding weight coordinate to zero independent of what this coordinate used to be (either $-1$ or $+1$). The plan is to collapse the two possible states into a single state, then force the weight coordinate to zero.

Since this is our first gadget, we will need to do some legwork and write down the gradients involved in an update. For a datapoint $(\bfx,y)$, the hinge loss function is: $\ell_{hinge}(\bfw, \bfx, y) = \max \{0,1 - y \bfw\cdot \bfx)\}$ and the update is:
\[
  \frac{\partial \ell_{hinge}(\bfw, \bfx, y)}{\partial w_i}
  =
  \begin{cases}
    -y x_i & \text{if } y \bfw\cdot \bfx < 1 \\
    0 & \text{if } y \bfw\cdot \bfx > 1
  \end{cases}
\]

\begin{table}
\centering
\begin{minipage}{.5\textwidth}
\caption{Training data for \reset{i_1}.}
\label{tab:reset}
\centering
\begin{tabular}{@{}ccr@{}r@{$) \to ($}r@{$)$}@{}}
  \toprule
  $x_{i_1}$ & $y$ & \multicolumn{3}{c}{Effect on $(w_{i_1})$} \\ \midrule
  $-2$ & $1$ & $($ & $-1$ & $-1$ \\
  \multicolumn{2}{c}{} & $($ & $1$ & $-1$ \\ \midrule
  $1$ & $1$ & $($ & $-1$ & $0$ \\
  \multicolumn{2}{c}{(add trick)} & $($ & $-1$ & $0$ \\
  \bottomrule
  \vspace{0.31cm}
\end{tabular}
\end{minipage}%
\begin{minipage}{.5\textwidth}
\caption{Training data for \fnot{i_1}.}
\label{tab:not}
\centering
\begin{tabular}{@{}ccr@{}r@{$) \to ($}r@{$)$}@{}}
  \toprule
  $x_{i_1}$ & $y$ & \multicolumn{3}{c}{Effect on $(w_{i_1})$} \\ \midrule
  $4$ & $1$ & $($ & $-1$ & $3$ \\
  \multicolumn{2}{c}{} & $($ & $1$ & $1$ \\ \midrule
  $-2$ & $1$ & $($ & $3$ & $1$ \\
  \multicolumn{2}{c}{(add trick)} & $($ & $1$ & $-1$ \\
  \bottomrule
  \vspace{0.31cm}
\end{tabular}
\end{minipage}
\end{table}

Following our plan, we don't know $w_{i_1}$ but want to collapse the two possible states to a single state. What is an appropriate training example that will allow us to do so? Consider the first training example listed in \hyperref[tab:reset]{Table~\ref{tab:reset}}; we have that $x_{i_1} = -2$, $\bfx$ is zero on the remainder of its coordinates, and $y = +1$. There are two cases to consider when we apply this training example.
\begin{itemize}
\item In the case of $w_{i_1}= -1$, we have $y \bfw\cdot \bfx = (-1)(-2) > 1$ and so there is no update since the gradient of the hinge loss is zero. Hence $w_{i_1}$ remains $-1$.
\item If $w_{i_1}= +1$, we have $y \bfw\cdot \bfx = (+1)(-2) < 1$, and so there is an update. After this update we get: $w_{i_1}\leftarrow w_{i_1} + (+1)(-2)\implies w_{i_1}\leftarrow -1$, as desired. 
\end{itemize}

We have now successfully collapsed into a single state. The next step of our plan was to force the weight coordinate to zero; we want to add $+1$ to $-1$. As it turns out, adding a positive amount to a negative weight (or a negative amount to a positive weight) is easy, and can be done in a single training example. The signs work out so that we can ignore the hinge criterion and choose values that would result in the correct update, and the hinge criterion is naturally satisfied. In the implementation of other gadgets, we will refer to this as the add trick.

Consider the second training example listed in \hyperref[tab:reset]{Table~\ref{tab:reset}}; we have that $x_{i_1} = +1$, $\bfx$ is zero on the remainder of its coordinates, and $y = +1$. Since we know that $w_{i_1}= -1$, we have that $y \bfw\cdot \bfx = (+1)(-1) < 1$ and so there is an update. After this update we get: $w_{i_1}\leftarrow w_{i_1} + (+1)(+1)\implies w_{i_1}\leftarrow 0$, as desired.

\subsection{Implementation of \fnot{i_1}}

The \texttt{not} gadget (see~\hyperref[tab:not]{Table~\ref{tab:not}}) takes as input one index $i_1$ and negates the corresponding weight coordinate. The gadget construction plan is to first swap the roles of high state/low state while maintaining a gap of two, then lower states to the proper values.

Following our plan, we don't know $w_{i_1}$ but want to reverse the order of the states. The more important training example is the first training example listed in \hyperref[tab:not]{Table~\ref{tab:not}}; we have that $x_{i_1} = +4$, $\bfx$ is zero on the remaining coordinates, and the label is $+1$.
\begin{itemize}
\item If $w_{i_1}= -1$, we have $y \bfw\cdot \bfx = (-1)(+4) < 1$, and so there is an update. After this update we get: $w_{i_1}\leftarrow w_{i_1} + (+1)(+4)\implies w_{i_1}\leftarrow +3$.
\item In the case of $w_{i_1}= +1$, we have $y \bfw\cdot \bfx = (+1)(+4) > 1$ and so there is no update since the gradient of the hinge loss is zero. Hence $w_{i_1}$ remains $+1$.
\end{itemize}

Hence we have swapped the low-value state with the high-value state, while maintaining a difference of two between the two states. The second training example is the same add trick that we used before; we add $-2$ to two possible (positive) states, resulting in our desired final values.

All the necessary technical details on how one can implement $\texttt{copy}, \texttt{destructive\_nand}, \\ \texttt{set\_false\_if\_unset}$ and $\texttt{copy\_if\_true}$ are provided in \hyperref[sec:appendix]{Appendix~\ref{sec:appendix}}.
\section{Extensions}
\label{sec:extensions}

In this section, we give extensions to our proof techniques to remove the assumptions we made in \hyperref[sec:reduction]{Section~\ref{sec:reduction}}.

\subsection{Handling a Bias Term}
\label{ss:bias}

In this subsection, we show how to remove assumption (i) and handle an SVM bias term. With the bias term added back in, the loss function is now:
\begin{align*}
  \ell_{hinge}(\bfw, b, \bfx, y)
  &=
  \max \{0, 1 - y (\bfw \cdot \bfx - b)\}
\end{align*}

Using a standard trick, we can simulate this bias term by adding an extra dimension $b_1$ and insisting that $x_{b_1} = -1$ for every training point; the corresponding $w_{b_1}$ entry plays the role of $b$. We now explain how to modify the reduction to follow the restriction that $x_{b_1} = -1$ for every training point.

The key insight is that if we can ensure that the value of this bias term is $w_{b_1} = 0$ immediately preceding every training example from the base construction, then $y (\bfw \cdot \bfx)$ will remain the same and the base construction will proceed as before. The problem is that whenever a base construction training example is in the first case for the derivative (namely $y (\bfw \cdot \bfx) < 1$), this will result in an update to $w_{b_1}$. Since every base construction training example chooses $y = +1$, we know the first case causes $w_{b_1}$ to be updated from $0$ to $-1$. We need to insert an additional training example to correct it back to $0$. To complicate matters further, we sometimes don't know whether we are in the first or second case for the derivative, so we don't know whether $w_{b_1}$ has remained at $0$ or has been altered to $-1$. We need to provide a gadget such that for either case, $w_{b_1}$ is corrected to $0$.

In order to avoid falling on the border of the hinge loss function ($y (\bfw \cdot \bfx) = 1$), we will be using \emph{two} mirrored bias terms. In other words, we add two extra dimensions, $b_1$ and $b_2$ and insist that $x_{b_1} = x_{b_2} = -1$ for every training point. We ensure that $w_{b_1} = w_{b_2} = 0$ before every base construction training example. Since they always have the same weight, the two points always receive the same update, and the situtation is now that either (i) they both remained at $0$ or (ii) they both were altered to $-1$. We would like to correct them both to $0$.

\begin{table}
\caption{Training data to correct the bias term.}
\label{tab:bias}
\centering
\begin{tabular}{@{}cccr@{}r@{, }r@{$) \to ($}r@{, }r@{$)$}@{}}
  \toprule
  $x_{b_1}$ & $x_{b_2}$ & $y$ & \multicolumn{5}{c}{Effect on $(w_{b_1}, w_{b_2})$} \\ \midrule
  $-1$ & $-1$ & $1$ & $($ & $-1$ & $-1$ & $-1$ & $-1$ \\
  \multicolumn{3}{c}{} & $($ & $0$ & $0$ & $-1$ & $-1$ \\ \midrule
  $-1$ & $-1$ & $-1$ & $($ & $-1$ & $-1$ & $0$ & $0$ \\
  \multicolumn{3}{c}{} & $($ & $-1$ & $-1$ & $0$ & $0$ \\
  \bottomrule
\end{tabular}
\end{table}

The two training examples that implement this behavior can be found in \hyperref[tab:bias]{Table~\ref{tab:bias}}. The first training example combines cases by transforming case (i) into case (ii) and resulting in no updates when in case (ii). The second training example then resets both values to $0$. To fix the base construction, we insert this gadget immediately after every base training example. As stated previously, this guarantees that $w_{b_1} = 0$ immediately before every base construction training example, which thus proceeds in the same fashion.

\subsection{Handling a Fixed Learning Rate}
\label{ss:eta}

In this subsection, we show how to remove our assumption that the learning rate $\eta = 1$. Suppose we have some other step size $\eta$, possibly a function of $T$, the total number of steps to run OGD. We perform our reduction from \cpath{} as before, pretending that $\eta = 1$. This yields a value for $T$, which we can then use to determine $\eta(T)$.

We then scale all training vectors $\bfx$ (but not labels $y$) by $\frac{1}{\sqrt{\eta}}$. We claim that our analysis holds when the weight vectors $\bfw$ are scaled by $\sqrt{\eta}$. To see why, we reconsider the updates performed by OGD. First, consider the gradient terms:
\begin{align*}
  \frac{\partial \ell_{hinge}(\bfw, \bfx, y)}{\partial w_i}
  &=
  \begin{cases}
    -y x_i & \text{if } y (\bfw \cdot \bfx) < 1 \\
    0 & \text{if } y (\bfw \cdot \bfx) > 1
  \end{cases}
\end{align*}

Notice that the scaling of $\bfx$ and the scaling of $\bfw$ cancel out when computing $\bfw \cdot \bfx$, so we stay in the same case. Since $\bfx$ was scaled by $\frac{1}{\sqrt{\eta}}$, our gradients scale by that amount as well. However, since the updates performed are $\eta$ times the new gradient, the net scaling of updates to $\bfw$ is by a factor of $\sqrt{\eta}$. Since our analysis of $\bfw$ is scaled up by exactly this amount as well, $\bfw$ is updated as we previously reasoned.

As an aside, one common use case is annealing the learning rate, e.g. $\eta_t = 1/\sqrt{t}$. For this case, it is possible to use our machinery to perform a circuit to OGD reduction, but the result would be that determining the exact result of OGD after it is fed a series of examples once (not repeatedly) is $P$-complete (computable in polynomial time, but probably not parallelizable). The issue is that different passes over the training data would be performed at different scales, but we can still get some complexity out of a single pass.

\subsection{Handling a Regularizer}

In this subsection, we discuss how to handle a regularization parameter $\lambda$ which is not too large. Consider the hinge loss objective with a regularizer:
\begin{align*}
  \ell_{reg}(\bfw, \bfx, y)
  &=
  \max \{0, 1 - y (\bfw \cdot \bfx)\} + \tfrac{\lambda}{2} \norm{\bfw}{2}^2 \\
  \frac{\partial   \ell_{reg}(\bfw, \bfx, y)}{\partial w_i}
  &=
  \begin{cases}
    -y x_i & \text{if } y (\bfw \cdot \bfx) < 1 \\
    0 & \text{if } y (\bfw \cdot \bfx) > 1
  \end{cases} \\
  &\hphantom{=}
  +
  \lambda w_i
\end{align*}

Conceptually, the regularizer causes our weights to slowly decay over time. In particular, this new $\lambda w_i$ term in the gradient means that weights decay by $\alpha = (1 -\lambda)$ at each step. We assume that this decay rate is not too fast: $\alpha \in \left(\frac{1}{\sqrt{2}}, 1 \right)$. Equivalently, $\lambda \in \left( 0, 1 - \frac{1}{\sqrt{2}} \right)$. Due to this decay, we will no longer be able to maintain the association that a true bit is $+1$, a false bit is $-1$, and an unset bit is $0$. Instead, for each weight index $i$ the reduction will need to maintain a counter $\epsilon_i$ which represents the current magnitude of any true/false bit being stored in that weight variable $w_i$. A true bit will be $+\epsilon_i$, a false bit will be $-\epsilon_i$, and an unset bit will still be $0$. After each training example it adds, the reduction should multiply each counter $\epsilon_i$ by $\alpha$.

Correspondingly, our API will need to grow more complex as well. The new API, the modified reduction which uses it, and the formal implementation can all be found in \hyperref[sec:regularizer]{Appendix~\ref{sec:regularizer}}.

\newcommand{\nand}[2]{\text{NAND}\left(#1, #2\right)}
\newpage
\bibliographystyle{alpha}
\bibliography{main}
\newpage
\appendix

\section{Barrier for Quadratic Models}
\label{apx:quadratic}

In this appendix, we explain why our reductions cannot go through for a large class of models. This class includes the method of least squares, in which the loss function for the current choice of weights $\bfw^t$ and a point $(\bfx^t, y^t)$ is given by:
\[
  \ell_{LS}(\bfw^t, (\bfx^t, y^t)) = (y^t - \bfw^t \cdot \bfx^t)^2
\]

More specifically, this barrier applies to any model where the loss function is quadratic in the weights, i.e. of the following form.
\[
  \ell(\bfw^t, (\bfx^t, y^t)) = \sum_{i=1}^d \sum_{j=1}^d \alpha_{i, j}(\bfx^t, y^t) w_i w_j
      + \sum_{i=1}^d \beta_{i}(\bfx^t, y^t) w_i
      + \gamma(\bfx^t, y^t)
\]
Note that the quadratic coefficients $\alpha, \beta, \gamma$ may be \emph{arbitrary functions} of the training points, and without loss of generality we consider the coefficients $\alpha$ to be symmetrized so that $\alpha_{i, j} = \alpha_{j, i}$.

The key point about such functions is that the gradient update with respect to point $(\bfx^t, y^t)$ is a linear transformation of the weights. In particular, notice that the derivative with respect to the $k^{th}$ weight is:
\[
  \frac{\partial \ell}{\partial w_k} = 2 \sum_{i=1}^d \alpha_{i, k}(\bfx^t, y^t) w_i + \beta_k(\bfx^t y^t)
\]

Hence an OGD with fixed step size $\eta$ will have the form:
\[
  w^{t+1}_k = w^t_k - \eta \left[ 2 \sum_{i=1}^d \alpha_{i, k}(\bfx^t, y^t) w_i + \beta_k(\bfx^t y^t) \right]
\]

We can hence write our update as a matrix-vector product if we augment our weight vector with a one:
\[
  \begin{bmatrix}
    w^{t+1}_1 \\
    w^{t+1}_2 \\
    \vdots \\
    w^{t+1}_d \\
    1
  \end{bmatrix}
  =
  \underbrace{
  \left(
  I_{d+1} - \eta
  \begin{bmatrix}
    2 \alpha_{1, 1} & 2 \alpha_{1, 2} & \hdots & 2 \alpha_{1, d} & \beta_1 \\
    2 \alpha_{2, 1} & 2 \alpha_{2, 2} & \hdots & 2 \alpha_{2, d} & \beta_2 \\
    \vdots & \vdots & \ddots & \vdots & \vdots \\
    2 \alpha_{d, 1} & 2 \alpha_{d, 2} & \hdots & 2\alpha_{d, d} & \beta_d \\
    0 & 0 & \hdots & 0 & 0
  \end{bmatrix}
  \right)
  }_{\text{denote this as } M^t}
  \begin{bmatrix}
    w^{t}_1 \\
    w^{t}_2 \\
    \vdots \\
    w^{t}_d \\
    1
  \end{bmatrix}
\]

Hence, for such a ``quadratic'' model, each training example $(\bfx^t, y^t)$ is equivalent to a specific linear\footnote{Strictly speaking, these transformations are actually affine.} transformation $M^t$. However, we know that circuit gates (e.g. NAND) are nonlinear! Since the composition of linear transformations is still linear, we cannot encode a general circuit as a series of training examples for OGD.

As an aside, this suggests a fast method for approximately computing the weights of OGD on such a quadratic model after $\tau$ iterations. Specifically, consider the situtation where we OGD is repeatedly fed a sequence of $T$ points $(\bfx^1, y^1), (\bfx^2, y^2), ..., (\bfx^T, y^T)$ over and over again (in the same order) with initial weights $\bfw^1$. We want to know $\bfw^\tau$, the resulting weights after $\tau - 1$ iterations of OGD; we can compute these weights with only $O(T + \log \tau)$ matrix multiplications.

First, we compute the product $M = M^T M^{T-1} \cdots M^1$, which can be done with $(T-1) = O(T)$ matrix multiplications. Next, let $\tau' = \lfloor (\tau-1) / T \rfloor$. We compute $M^{\tau'}$ using the standard exponentiating by squaring trick, which requires $2\log_2 \tau' = O(\log \tau)$ matrix multiplications. Finally, we can apply the remaining $(\tau - 1) - T \tau' < T$ matrices through $O(T)$ more matrix multiplications. We take the resulting matrix and multiply it with our original weight vector. As claimed, we computed the new weight vector in only $O(T + \log \tau)$ matrix multplications.

The slight issue with the above method is that if we want to compute the weight vector exactly, the repeated squaring will rapidly increase the magnitude of the matrix entries and make multiplication expensive. It is possible to circumvent this issue by working with limited precision or over a finite field.
\section{API Implementation (Continued)}
\label{sec:appendix}

In this appendix, we implement the remaining functions of our API for soft-margin SVMs, which were listed in \hyperref[tab:public-api]{Table~\ref{tab:public-api}}.

\subsection{Implementation of \cpy{i_1}{i_2}}
Suppose we want to copy the $i_1$-th coordinate of the weight vector to its $i_2$-th coordinate. How can we do that using only gradient updates? The plan is to have a training example with both $x_{i_1}$ and $x_{i_2}$ nonzero. Intuitively, this first training example will ``read'' from $w_{i_1}$ and ``write'' to $w_{i_2}$ (it actually writes to both). We then perform some tidying so that the two possible states for each weight coordinate become $-1$ and $+1$. The sequence of operations together with the resulting weight vector after the gradient updates are provided in \hyperref[tab:cpy]{Table~\ref{tab:cpy}}. Observe that in the end, the value of the $i_2$-th coordinate of the weight vector is exactly the same as the $i_1$-coordinate and the operation \cpy{i_1}{i_2} is performed correctly.

The aforementioned read-write training example has label $+1$, $x_{i_1}=-4,x_{i_2}=+2$ and $x_i=0, \forall i\neq i_1,i_2$. After this example, we use a \fnot{i_1} gadget and the add trick to clean up.

\begin{itemize}
\item Let's focus in the case where $w_{i_1}=-1$ (upper half of every row in \hyperref[tab:cpy]{Table~\ref{tab:cpy}}). Without loss of generality let $w_{i_2}=0$ since otherwise we can just perform \reset{i_2} using previously defined gadgets. 

The gradient update on the first example will not affect the weight vector as $y \bfw\cdot \bfx = (+1)(-1)(-4)= 4> 1$. Then we just add $+2$ to get $(w_{i_1},w_{i_2})=(+1,0)$. After the not and the add trick, we end up with the desired $(w_{i_1},w_{i_2})=(-1,-1)$ outcome. 

\item This is similar to the previous case and by tracking down the gradient updates we end up with the desired $(w_{i_1},w_{i_2})=(+1,+1)$ outcome.
\end{itemize}

\begin{table}
\caption{Training data for \cpy{i_1}{i_2}.}
\label{tab:cpy}
\centering
\begin{tabular}{@{}cccr@{}r@{, }r@{$) \to ($}r@{, }r@{$)$}@{}}
  \toprule
  $x_{i_1}$ & $x_{i_2}$ & $y$ & \multicolumn{5}{c}{Effect on $(w_{i_1}, w_{i_2})$} \\ \midrule
  $-4$ & $2$ & $1$ & $($ & $-1$ & $0$ & $-1$ & $0$ \\
  \multicolumn{3}{c}{} & $($ & $1$ & $0$ & $-3$ & $2$ \\ \midrule
  $2$ & $0$ & $1$ & $($ & $-1$ & $0$ & $1$ & $0$ \\
  \multicolumn{3}{c}{(add trick)} & $($ & $-3$ & $2$ & $-1$ & $2$ \\ \midrule
  \multicolumn{3}{c}{\fnot{i_1}} & $($ & $1$ & $0$ & $-1$ & $0$ \\
  \multicolumn{3}{c}{} & $($ & $-1$ & $2$ & $1$ & $2$ \\ \midrule
  $0$ & $-1$ & $1$ & $($ & $-1$ & $0$ & $-1$ & $-1$ \\
  \multicolumn{3}{c}{(add trick)} & $($ & $1$ & $2$ & $1$ & $1$ \\
  \bottomrule
\end{tabular}
\end{table}

\subsection{Implementation of \dnand{i_1}{i_2}{i_3}}
We want to implement a NAND gate with inputs the coordinates $w_{i_1},w_{i_2}$ and output the result in $w_{i_3}$. Following our intuition, we will need a training example that is nonzero in $x_{i_1}, x_{i_2}$, and $x_{i_3}$, so that it can read the first two and write to the third. However, as before, such a training example necessarily modifies all three weights. To keep things simple, we will only ask our gadget to zero out $w_{i_1}$ and $w_{i_2}$, not restore them to their original values. This loss of input values is why we refer to this gadget as \emph{destructive} NAND. The operations needed are provided in \hyperref[tab:dnand]{Table~\ref{tab:dnand}}, and we only give the intuition regarding how this gadget was constructed.

As stated, our main training example will have nonzero values in all three coordinates. We would like to set things up so that the hinge criterion is satisfied only in the false case of NAND. To do so, we begin with an add trick which adds $-1$ to the third weight coordinate. Now, the sum of the three weights is either $-3$, $-1$, or $+1$, and this last case is the one we want to single out. For our main training example, we choose a magnitude of $2$ for our training values so that the possible sums become $-6$, $-2$, and $+2$; this puts the hinge threshold of $+1$ firmly between the two cases we care about. We finish with two reset gadgets and an add trick.

\begin{table}
\caption{Training data for \dnand{i_1}{i_2}{i_3}.}
\label{tab:dnand}
\centering
\begin{tabular}{@{}ccccr@{}r@{, }r@{, }r@{$) \to ($}r@{, }r@{, }r@{$)$}@{}}
  \toprule
  $x_{i_1}$ & $x_{i_2}$ & $x_{i_3}$ & $y$ & \multicolumn{7}{c}{Effect on $(w_{i_1}, w_{i_2}, w_{i_3})$} \\ \midrule
  $0$ & $0$ & $-1$ & $1$ & $($ & $-1$ & $-1$ & $0$ & $-1$ & $-1$ & $-1$ \\
  \multicolumn{4}{c}{(add trick)} & $($ & $-1$ & $1$ & $0$ & $-1$ & $1$ & $-1$ \\
  \multicolumn{4}{c}{} & $($ & $1$ & $-1$ & $0$ & $1$ & $-1$ & $-1$ \\
  \multicolumn{4}{c}{} & $($ & $1$ & $1$ & $0$ & $1$ & $1$ & $-1$ \\ \midrule
  $-2$ & $-2$ & $-2$ & $1$ & $($ & $-1$ & $-1$ & $-1$ & $-1$ & $-1$ & $-1$ \\
  \multicolumn{4}{c}{} & $($ & $-1$ & $1$ & $-1$ & $-1$ & $1$ & $-1$ \\
  \multicolumn{4}{c}{} & $($ & $1$ & $-1$ & $-1$ & $1$ & $-1$ & $-1$ \\
  \multicolumn{4}{c}{} & $($ & $1$ & $1$ & $-1$ & $-1$ & $-1$ & $-3$ \\ \midrule
  \multicolumn{4}{c}{\reset{i_1}} & $($ & $-1$ & $-1$ & $-1$ & $0$ & $-1$ & $-1$ \\
  \multicolumn{4}{c}{} & $($ & $-1$ & $1$ & $-1$ & $0$ & $1$ & $-1$ \\
  \multicolumn{4}{c}{} & $($ & $1$ & $-1$ & $-1$ & $0$ & $-1$ & $-1$ \\
  \multicolumn{4}{c}{} & $($ & $-1$ & $-1$ & $-3$ & $0$ & $-1$ & $-3$ \\ \midrule
  \multicolumn{4}{c}{\reset{i_2}} & $($ & $0$ & $-1$ & $-1$ & $0$ & $0$ & $-1$ \\
  \multicolumn{4}{c}{} & $($ & $0$ & $1$ & $-1$ & $0$ & $0$ & $-1$ \\
  \multicolumn{4}{c}{} & $($ & $0$ & $-1$ & $-1$ & $0$ & $0$ & $-1$ \\
  \multicolumn{4}{c}{} & $($ & $0$ & $-1$ & $-3$ & $0$ & $0$ & $-3$ \\ \midrule
  $0$ & $0$ & $2$ & $1$ & $($ & $0$ & $0$ & $-1$ & $0$ & $0$ & $1$ \\
  \multicolumn{4}{c}{(add trick)} & $($ & $0$ & $0$ & $-1$ & $0$ & $0$ & $1$ \\
  \multicolumn{4}{c}{} & $($ & $0$ & $0$ & $-1$ & $0$ & $0$ & $1$ \\
  \multicolumn{4}{c}{} & $($ & $0$ & $0$ & $-3$ & $0$ & $0$ & $-1$ \\
  \bottomrule
\end{tabular}
\end{table}

\subsection{Implementation of \inputF{i_1}}
The effect of \inputF{i_1} is to map the $i_1$-th coordinate (which is either $-1,0,+1$) to $-1$, unless it is $+1$ in which case it should remain $+1$. The 4 steps in \hyperref[tab:inputF]{Table~\ref{tab:inputF}} with the \texttt{add} gadgets should be clear by now. Here we give the calculations of the gradients and updates for the 3 steps that contain training examples.

\begin{itemize}
\item The training example has label $y=+1$, with $x_{i_1}=+3$ and $x_{i}=0,\forall i\neq i_1$. If $w_{i_1}=0$ then $y \bfw\cdot \bfx=(+1)(0)=0 < 1$ so the gradient step will add $yx_{i_1}=(+1)(+3)=3$ to $w_{i_1}$. If $w_{i_1}=+1$ then $y \bfw\cdot \bfx=(+1)(+1)(+3)=3 > 1$ so there is no update. If $w_{i_1}=+2$, then again there is no update. 

\item The training example has label $y=+1$, with $x_{i_1}=+2$ and $x_{i}=0,\forall i\neq i_1$. If $w_{i_1}=+2$ then $y \bfw\cdot \bfx=(+1)(+2)(+2)=+4 > 1$ so there is no update. If $w_{i_1}=0$, then $y \bfw\cdot \bfx=0 < 1$, so the gradient step will add $yx_{i_1}=(+1)(+2)=2$ to $w_{i_1}$. If $w_{i_1}=+1$ then $y \bfw\cdot \bfx=(+1)(+1)(+2)=2 > 1$ so there is no update. 

\item Training on the final training example is similar to the first case above.
\end{itemize}

\begin{table}
\caption{Training data for \inputF{i_1}.}
\label{tab:inputF}
\centering
\begin{tabular}{@{}ccr@{}r@{$) \to ($}r@{$)$}@{}}
  \toprule
  $x_{i_1}$ & $y$ & \multicolumn{3}{c}{Effect on $(w_{i_1})$} \\ \midrule
  $-\tfrac14$ & $1$ & $($ & $-1$ & $-\tfrac54$ \\
  \multicolumn{2}{c}{} & $($ & $0$ & $-\tfrac14$ \\
  \multicolumn{2}{c}{} & $($ & $1$ & $\tfrac34$ \\ \midrule
  $-1$ & $1$ & $($ & $-\tfrac54$ & $-\tfrac54$ \\
  \multicolumn{2}{c}{} & $($ & $-\tfrac14$ & $-\tfrac54$ \\
  \multicolumn{2}{c}{} & $($ & $\tfrac34$ & $-\tfrac14$ \\ \midrule
  $-3$ & $1$ & $($ & $-\tfrac54$ & $-\tfrac54$ \\
  \multicolumn{2}{c}{} & $($ & $-\tfrac54$ & $-\tfrac54$ \\
  \multicolumn{2}{c}{} & $($ & $-\tfrac14$ & $-\tfrac{13}{4}$ \\ \midrule
  $\tfrac94$ & $1$ & $($ & $-\tfrac54$ & $1$ \\
  \multicolumn{2}{c}{(add trick)} & $($ & $-\tfrac54$ & $1$ \\
  \multicolumn{2}{c}{} & $($ & $-\tfrac{13}{4}$ & $-1$ \\ \midrule
  \multicolumn{2}{c}{\fnot{i_1}} & $($ & $1$ & $-1$ \\
  \multicolumn{2}{c}{} & $($ & $1$ & $-1$ \\
  \multicolumn{2}{c}{} & $($ & $-1$ & $1$ \\
  \bottomrule
\end{tabular}
\end{table}

\subsection{Implementation of \setiftrue{i_1}{i_2}}
This short gadget is given two coordinates $i_1,i_2$ and sets $w_{i_2}=+1$ only if $w_{i_1}=+1$, otherwise everything stays unchanged. We use it to decide if at any point in the circuit computation, the target binary string $s^*$ is ever reached, in which case a specially reserved bit in the weight vector (e.g. the first bit of the $w$) is set to 1 to signal this fact.

We are going to use one training example, an add trick and then a {\texttt{not}} gadget and the calculations explaining the derivations of \hyperref[tab:setiftrue]{Table~\ref{tab:setiftrue}} are given below:

\begin{itemize}
\item The first training example has label $y=+1$, with $x_{i_1}=-4,x_{i_2}=+1$ and $x_{i}=0,\forall i\neq i_1,i_2$. If $w_{i_1}=-1,w_{i_2}=0$ then $y \bfw\cdot \bfx=(+1)(+4)=+4 > 1$ so there is no update. If $w_{i_1}=+1,w_{i_2}=0$ then $y \bfw\cdot \bfx=(+1)(+1)(-4)=-4 < 1$, so the gradient step will add $yx_{i_1}=(+1)(-4)=-4$ to $w_{i_1}$ (which now becomes $-3$) and $yx_{i_2}=(+1)(+1)=+1$ to $w_{i_2}$ (which now becomes $+1$). 

\item Then, we perform the add trick mentioned above with the training example that has label $y=+1$, with $x_{i_1}=2,x_{i_2}=0$ and $x_{i}=0,\forall i\neq i_1,i_2$ and finally we use a {\texttt{not}} gadget. The corresponding weight updates are shown in \hyperref[tab:setiftrue]{Table~\ref{tab:setiftrue}}.

\end{itemize}

\begin{table}
\caption{Training data for \setiftrue{i_1}{i_2}.}
\label{tab:setiftrue}
\centering
\begin{tabular}{@{}cccr@{}r@{, }r@{$) \to ($}r@{, }r@{$)$}@{}}
  \toprule
  $x_{i_1}$ & $x_{i_2}$ & $y$ & \multicolumn{5}{c}{Effect on $(w_{i_1}, w_{i_2})$} \\ \midrule
  $-4$ & $1$ & $1$ & $($ & $-1$ & $0$ & $-1$ & $0$ \\
  \multicolumn{3}{c}{} & $($ & $1$ & $0$ & $-3$ & $1$ \\ \midrule
  $2$ & $0$ & $1$ & $($ & $-1$ & $0$ & $1$ & $0$ \\
  \multicolumn{3}{c}{(add trick)} & $($ & $-3$ & $1$ & $-1$ & $1$ \\ \midrule
  \multicolumn{3}{c}{\fnot{i_1}} & $($ & $1$ & $0$ & $-1$ & $0$ \\
  \multicolumn{3}{c}{} & $($ & $-1$ & $1$ & $1$ & $1$ \\
  \bottomrule
\end{tabular}
\end{table}

\section{Proof Extension for Regularization (Continued)}
\label{sec:regularizer}

In this appendix, we give an augmented API for regularization, show how to modify the original reduction to use the augmented API, and then give an implementation of the API.

\subsection{Augmented API for Regularization}

Our augmented API is listed in \hyperref[tab:regularizer-api]{Table~\ref{tab:regularizer-api}}. These five functions serve the same purpose as the functions of our original API (see \hyperref[tab:public-api]{Table~\ref{tab:public-api}}), but now accept additional parameters and have return values so that our reduction can keep track of the magnitude of each weight.

All gadgets here,
\rreset{i_1}{\epsilon_1},
\rdnand{i_1}{i_2}{i_3}{\epsilon_1}{\epsilon_2},
\rinputF{i_1}{\epsilon_1},
and \rsetiftrue{i_1}{i_2}{\epsilon_1} have essentially the same behavior as before, but now accept magnitude parameters and output the final magnitude of the weights that they write to. A more drastic change was made to \rcopy{i_1}{i_2}{i_3}{\epsilon_1}, which now destroys the bit stored in its input weight. To compensate, it now makes two copies, so that using it increases the total number of copies of a weight.

\subsection{Reduction Modifications for Regularization}

\newcommand{\specw}{\triangle}

Our reduction still performs the same transformation of $\C$ into $\C'$. However, we will use an additional dimension (now $d = n + m + 4$), which we also denote with a new special: $\specw$. As stated before, we keep a counter $\epsilon_i$ for each dimension $i$, decaying all counters by $\alpha$ after each training example we produce.

In most cases, the appropriate $\epsilon_i$ to pass to our gadgets is clear: we take the last $\epsilon_i$ we received from a gadget writing to this coordinate and decay it appropriately. There is one major exception: in the first phase of the reduction, we need to iterate over $i = 1, 2, \ldots, n$ and call \rinputF{i}{\epsilon_i}. The correct input magnitude is actually based on the last time these weights were possibly edited, which is actually in the (previous pass over the data) fourth phase of the reduction! Luckily, in our implementation of this API the number of training examples to implement a gadget \emph{does not depend} on the inputs $\epsilon_i$. As a result, we can either pick the appropriate values knowing the contents of all the phases, or we can run the reduction once with $\epsilon_i = 1$ and then perform a second pass once we know the total number of training examples and which training examples are associated with which API calls. One important consequence of this reasoning is that since the reduction touches each coordinate at least once as we pass over all training examples, the maximum decay of any weight is only singly-exponential in the number of training examples (which is polynomial in the original circuit problem size), which is better than the naive bound of double-exponential. As a result, we only require polynomial bits of precision are needed to represent the weights at any point in time. Note that if one does not care about regularization, then all of our other constructions only required \emph{fixed precision}.

Other than managing these magnitudes, we also alter the second and fourth phase of our reduction to account for a revised copy function (this is why we need an additional dimension). In the new second phase of our reduction, we iterate over $i = n+1, n+2, \ldots, n+m$. Again, we look at the associated NAND gate with inputs $i_1, i_2$. We call:
\begin{itemize}
  \item \rcopy{i_1}{\specy}{\specw}{\cdot},
  \item \rreset{\specy}{\cdot},
  \item \rcopy{\specw}{i_1}{\specy}{\cdot},
  \item \rcopy{i_2}{\specz}{\specw}{\cdot},
  \item \rreset{\specz}{\cdot},
  \item \rcopy{\specw}{i_2}{\specz}{\cdot}, and
  \item \rdnand{\specy}{\specz}{i}{\cdot}{\cdot},
\end{itemize}
in that order with appropriate $\epsilon_i$.

Similarly, in the fourth phase of our reduction, we iterate over $i = 1, 2, \ldots, n$ and call \rreset{i}{\cdot}, \rcopy{i_1}{i}{\specy}{\cdot}, \rcopy{\specy}{i_1}{\specz}{\cdot}, \rreset{\specz}{\cdot}, in that order with appropriate $\epsilon_i$.

The reason the reduction works is the same as before: the reduction forces the weights to simulate computation of the circuit and a check for $s^*$ with each pass through the training data. This completes the description of how to modify the reduction.

\subsection{Implementation of \rreset{i_1}{\epsilon_1}}

At a high level, the idea behind this implementation is as follows. We are given a weight that either contains a small negative or a small positive value. We would like to add the difference between these two potential values, but only in the case where the original value is negative. In order to do so, we must first increase both possible values so that when multiplied by their original difference, one falls below and one falls above our comparison threshold of $+1$.

\begin{table}
\caption{Training data for \rreset{i_1}{\epsilon_1}.}
\label{tab:rreset}
\centering
\begin{tabular}{@{}ccr@{}r@{$) \to ($}r@{$)$}@{}}
  \toprule
  $x_{i_1}$ & $y$ & \multicolumn{3}{c}{Effect on $(w_{i_1})$} \\ \midrule
  $\frac{1}{2\epsilon_1 \alpha^2}$ & $1$ & $($ & $-\epsilon_1$ & $\frac{1}{2\epsilon_1 \alpha^2} - \epsilon_1 \alpha$ \\
  \multicolumn{2}{c}{} & $($ & $\epsilon_1$ & $\frac{1}{2\epsilon_1 \alpha^2} + \epsilon_1 \alpha$ \\ \midrule
  $2 \epsilon_1 \alpha^2$ & $1$ & $($ & $\frac{1}{2\epsilon_1 \alpha^2} - \epsilon_1 \alpha$ & $\frac{1}{2\epsilon_1 \alpha} + \epsilon_1 \alpha^2$ \\
  \multicolumn{2}{c}{} & $($ & $\frac{1}{2\epsilon_1 \alpha^2} + \epsilon_1 \alpha$ & $\frac{1}{2\epsilon_1 \alpha} + \epsilon_1 \alpha^2$ \\ \midrule
  $-\frac{1}{2\epsilon_1} - \epsilon_1 \alpha^3$ & $1$ & $($ & $\frac{1}{2\epsilon_1 \alpha} + \epsilon_1 \alpha^2$ & $0$ \\
  \multicolumn{2}{c}{} & $($ & $\frac{1}{2\epsilon_1 \alpha} + \epsilon_1 \alpha^2$ & $0$ \\
  \bottomrule
\end{tabular}
\end{table}

The training data that executes this plan is given in \hyperref[tab:reset]{Table~\ref{tab:rreset}}. The first training example has a small magnitude so that both possibilities receive a gradient update:
\[
  \frac{1}{2\epsilon_1 \alpha^2} \cdot \epsilon_1 = \frac{1}{2\alpha^2}.
\]
Note that the RHS is at most $1$ due to the range of $\alpha$. This update sets up for the second training example. Observe that:
\[
  2 \epsilon_1 \alpha^2 \cdot \frac{1}{2\epsilon_1 \alpha^2} = 1
\]
so that the loss or gain of $\epsilon_1 \alpha$ pushes our first possibility below the threshold and our second possibility above the threshold of $+1$. We have now collapsed our two possibilities into only a single possibility. The third training example triggers an update because $x$ and $w$ have a negative dot product, and the term is chosen to cancel out the remaining value.

\begin{table}
\caption{Augmented API for Regularization. $\sigma(w_i)$ denotes the sign function.}
\label{tab:regularizer-api}
\centering
\begin{tabular}{@{}llll@{}}
  \toprule
  \textbf{Function} & \textbf{Precondition(s)} & \textbf{Returns} & \textbf{Description} \\ \midrule
  \rreset{i_1}{\epsilon_1} &
  $i_1 \in \memory$ &
  None &
  $w_{i_1} \leftarrow 0$ \\
  (for implementation, see  \hyperref[tab:reset]{Table~\ref{tab:rreset}})&
  $w_{i_1} \in \rbinary{1}$ &
  &
  \\ \midrule
  \rcopy{i_1}{i_2}{i_3}{\epsilon_1} &
  $i_1, i_2, i_3 \in \memory$ &
  $(\epsilon_2, \epsilon_3)$ &
  $w_{i_2} \leftarrow \sigma(w_{i_1}) \epsilon_2$ \\
  (for implementation, see  \hyperref[tab:rcopy]{Table~\ref{tab:rcopy}})&
  $w_{i_1} \in \rbinary{1}$ &
  &
  $w_{i_3} \leftarrow \sigma(w_{i_1}) \epsilon_3$ \\
  &
  $w_{i_2} = 0$ &
  &
  \\
  &
  $w_{i_3} = 0$ &
  &
  \\ \midrule
  \rdnand{i_1}{i_2}{i_3}{\epsilon_1}{\epsilon_2} &
  $i_1, i_2, i_3 \in \memory$ &
  $(\epsilon_3)$ &
  $w_{i_3} \leftarrow \nand{\sigma(w_{i_1})}{\sigma(w_{i_2})} \epsilon_3$ \\
  (for implementation, see  \hyperref[tab:rdnand]{Table~\ref{tab:rdnand}})&
  $w_{i_1} \in \rbinary{1}$ &
  &
  $w_{i_1} \leftarrow 0$ \\
  &
  $w_{i_2} \in \rbinary{2}$ &
  &
  $w_{i_2} \leftarrow 0$ \\ \midrule
  \rinputF{i_1}{\epsilon_1} &
  $i_1 \in \memory$ &
  $(\epsilon_1')$ &
  If $w_{i_1} = 0$, $w_{i_1} \leftarrow -\epsilon_1'$ \\
  (for implementation, see  \hyperref[tab:rinputF]{Table~\ref{tab:rinputF}})&
  $w_{i_1} \in \rubinary{1}$ &
  &
  Else, $w_{i_1} \leftarrow \sigma(w_{i_1}) \epsilon_1'$ \\ \midrule
  \rsetiftrue{i_1}{i_2}{\epsilon_1} &
  $i_1, i_2 \in \memory$ &
  $(\epsilon_1', \epsilon_2)$ &
  If $w_{i_1} > 0$, $w_{i_2} \leftarrow +\epsilon_2$ \\
  (for implementation, see  \hyperref[tab:rsetiftrue]{Table~\ref{tab:rsetiftrue}})&
  $w_{i_1} \in \rbinary{1}$ &
  &
  If $w_{i_1} < 0$, $w_{i_2}$ remains at $0$ \\
  &
  $w_{i_2} = 0$ &
  &
  (including in intermediate steps) \\
  &
  &
  &
  $w_{i_1} \leftarrow \sigma(w_{i_1}) \epsilon_1'$ \\
  \bottomrule
\end{tabular}
\end{table}

\subsection{Implementation of \rcopy{i_1}{i_2}{i_3}{\epsilon_1}}

At a high level, the idea behind this implementation is as follows. We are given a weight that either contains a small negative or a small positive value. Using a large multiplier, we can detect the sign of this weight and copy the sign into two other weights. We then cleanup and make the original weight zero.

\begin{table}
\caption{Training data for \rcopy{i_1}{i_2}{i_3}{\epsilon_1}.}
\label{tab:rcopy}
\centering
\begin{tabular}{@{}ccccr@{}r@{, }r@{, }r@{$) \to ($}r@{, }r@{, }r@{$)$}@{}}
  \toprule
  $x_{i_1}$ & $x_{i_2}$ & $x_{i_3}$ & $y$ & \multicolumn{7}{c}{Effect on $(w_{i_1}, w_{i_2}, w_{i_3})$} \\ \midrule
  $\frac{2}{\epsilon_1}$ & $-2$ & $-2$ & $1$ & $($ & $-\epsilon_1$ & $0$ & $0$ & $\frac{2}{\epsilon_1}-\epsilon_1 \alpha$ & $-2$ & $-2$ \\
  \multicolumn{4}{c}{} & $($ & $\epsilon_1$ & $0$ & $0$ & $\epsilon_1 \alpha$ & $0$ & $0$ \\ \midrule
  $-\frac{\alpha}{\epsilon_1}$ & $\alpha$ & $\alpha$ & $1$ & $($ & $\frac{2}{\epsilon_1}-\epsilon_1 \alpha$ & $-2$ & $-2$ & $\frac{\alpha}{\epsilon_1} - \epsilon_1 \alpha^2$ & $-\alpha$ & $-\alpha$ \\
  \multicolumn{4}{c}{} & $($ & $\epsilon_1 \alpha$ & $0$ & $0$ & $-\frac{\alpha}{\epsilon_1} + \epsilon_1 \alpha^2$ & $\alpha$ & $\alpha$ \\ \midrule
  \multicolumn{4}{c}{\rreset{i_1}{\frac{\alpha}{\epsilon_1} - \epsilon_1 \alpha^2}} & $($ & $\frac{\alpha}{\epsilon_1} - \epsilon_1 \alpha^2$ & $-\alpha$ & $-\alpha$ & $0$ & $-\alpha^4$ & $-\alpha^4$ \\
  \multicolumn{4}{c}{} & $($ & $-\frac{\alpha}{\epsilon_1} + \epsilon_1 \alpha^2$ & $\alpha$ & $\alpha$ & $0$ & $\alpha^4$ & $\alpha^4$ \\ \midrule
  \multicolumn{11}{c}{Return $(\epsilon_2 = \alpha^4, \epsilon_3 = \alpha^4)$.} \\
  \bottomrule
\end{tabular}
\end{table}

The training data that executes this plan is given in \hyperref[tab:rcopy]{Table~\ref{tab:rcopy}}. The first training example has enough magnitude so that the resulting product has magnitude $2$:
\[
  \frac{2}{\epsilon_1} \cdot \epsilon_1 = 2
\]
In the second update, we recenter around zero. In particular, we observe that $+\frac{2}{\epsilon_1}-\epsilon_1 \alpha$ is positive, so every component of $(w \cdot x)$ in this step is in fact negative, triggering an update.

We finish by using our reset gadget to clean up $w_{i_1}$, noting that it uses three training examples and our other weights continue to decay in the meantime.

\subsection{Implementation of \rdnand{i_1}{i_2}{i_3}{\epsilon_1}{\epsilon_2}}

At a high level, the idea behind this implementation is as follows. The idea is similar to our original NAND gate, where we used the observation that if two weights are $\pm 1$, we can use a threshold on their sum to compute NAND: when the sum is $-2$ or $0$, the result is true, and when the sum is $+2$, the result is false. We use this sum to put the result of the NAND computation into the third weight. Unfortunately, this results in the first two weights being in one of three possible states each, and some work is needed to clean them up as well. Finally, the third state should be made into the form $\pm \epsilon_3$.

\begin{table}
\small
\caption{Training data for \rdnand{i_1}{i_2}{i_3}{\epsilon_1}{\epsilon_2}.}
\label{tab:rdnand}
\centering
\begin{tabular}{@{}ccccr@{}r@{, }r@{, }r@{$) \to ($}r@{, }r@{, }r@{$)$}@{}}
  \toprule
  $x_{i_1}$ & $x_{i_2}$ & $x_{i_3}$ & $y$ & \multicolumn{7}{c}{Effect on $(w_{i_1}, w_{i_2}, w_{i_3})$} \\ \midrule
  $0$ & $0$ & $-1$ & $1$ & $($ & $-\epsilon_1$ & $-\epsilon_2$ & $0$ & $-\epsilon_1 \alpha$ & $-\epsilon_2 \alpha$ & $-1$ \\
  \multicolumn{4}{c}{} & $($ & $-\epsilon_1$ & $\epsilon_2$ & $0$ & $-\epsilon_1 \alpha$ & $\epsilon_2 \alpha$ & $-1$ \\
  \multicolumn{4}{c}{} & $($ & $\epsilon_1$ & $-\epsilon_2$ & $0$ & $\epsilon_1 \alpha$ & $-\epsilon_2 \alpha$ & $-1$ \\
  \multicolumn{4}{c}{} & $($ & $\epsilon_1$ & $\epsilon_2$ & $0$ & $\epsilon_1 \alpha$ & $\epsilon_2 \alpha$ & $-1$ \\ \midrule
  $-\frac{4}{\epsilon_1 \alpha}$ & $-\frac{4}{\epsilon_2 \alpha}$ & $-2 \alpha$ & $1$ & $($ & $-\epsilon_1 \alpha$ & $-\epsilon_2 \alpha$ & $-1$ & $-\epsilon_1 \alpha^2$ & $-\epsilon_2 \alpha^2$ & $-\alpha$ \\
  \multicolumn{4}{c}{} & $($ & $-\epsilon_1 \alpha$ & $\epsilon_2 \alpha$ & $-1$ & $-\epsilon_1 \alpha^2$ & $\epsilon_2 \alpha^2$ & $-\alpha$ \\
  \multicolumn{4}{c}{} & $($ & $\epsilon_1 \alpha$ & $-\epsilon_2 \alpha$ & $-1$ & $\epsilon_1 \alpha^2$ & $-\epsilon_2 \alpha^2$ & $-\alpha$ \\
  \multicolumn{4}{c}{} & $($ & $\epsilon_1 \alpha$ & $\epsilon_2 \alpha$ & $-1$ & $-\frac{4}{\epsilon_1 \alpha}+\epsilon_1 \alpha^2$ & $-\frac{4}{\epsilon_2 \alpha}+\epsilon_2 \alpha^2$ & $-3 \alpha$ \\ \midrule
  $\frac{4}{\epsilon_1}$ & $0$ & $0$ & $1$ & $($ & $-\epsilon_1 \alpha^2$ & $-\epsilon_2 \alpha^2$ & $-\alpha$ & $\frac{4}{\epsilon_1}-\epsilon_1 \alpha^3$ & $-\epsilon_2 \alpha^3$ & $-\alpha^2$ \\
  \multicolumn{4}{c}{} & $($ & $-\epsilon_1 \alpha^2$ & $\epsilon_2 \alpha^2$ & $-\alpha$ & $\frac{4}{\epsilon_1}-\epsilon_1 \alpha^3$ & $\epsilon_2 \alpha^3$ & $-\alpha^2$ \\
  \multicolumn{4}{c}{} & $($ & $\epsilon_1 \alpha^2$ & $-\epsilon_2 \alpha^2$ & $-\alpha$ & $\epsilon_1 \alpha^3$ & $-\epsilon_2 \alpha^3$ & $-\alpha^2$ \\
  \multicolumn{4}{c}{} & $($ & $-\frac{4}{\epsilon_1 \alpha}+\epsilon_1 \alpha^2$ & $-\frac{4}{\epsilon_2 \alpha}+\epsilon_2 \alpha^2$ & $-3 \alpha$ & $\epsilon_1 \alpha^3$ & $-\frac{4}{\epsilon_2}+\epsilon_2 \alpha^3$ & $-3\alpha^2$ \\ \midrule
  $0$ & $\frac{4\alpha}{\epsilon_2}$ & $0$ & $1$ & $($ & $\frac{4}{\epsilon_1}-\epsilon_1 \alpha^3$ & $-\epsilon_2 \alpha^3$ & $-\alpha^2$ & $\frac{4\alpha}{\epsilon_1}-\epsilon_1 \alpha^4$ & $\frac{4\alpha}{\epsilon_2}-\epsilon_2 \alpha^4$ & $-\alpha^3$ \\
  \multicolumn{4}{c}{} & $($ & $\frac{4}{\epsilon_1}-\epsilon_1 \alpha^3$ & $\epsilon_2 \alpha^3$ & $-\alpha^2$ & $\frac{4\alpha}{\epsilon_1}-\epsilon_1 \alpha^4$ & $\epsilon_2 \alpha^4$ & $-\alpha^3$ \\
  \multicolumn{4}{c}{} & $($ & $\epsilon_1 \alpha^3$ & $-\epsilon_2 \alpha^3$ & $-\alpha^2$ & $\epsilon_1 \alpha^4$ & $\frac{4\alpha}{\epsilon_2}-\epsilon_2 \alpha^4$ & $-\alpha^3$ \\
  \multicolumn{4}{c}{} & $($ & $\epsilon_1 \alpha^3$ & $-\frac{4}{\epsilon_2}+\epsilon_2 \alpha^3$ & $-3\alpha^2$ & $\epsilon_1 \alpha^4$ & $\epsilon_2 \alpha^4$ & $-3\alpha^3$ \\ \midrule
  $-\frac{2\alpha^2}{\epsilon_1}$ & $0$ & $0$ & $1$ & $($ & $\frac{4\alpha}{\epsilon_1}-\epsilon_1 \alpha^4$ & $\frac{4\alpha}{\epsilon_2}-\epsilon_2 \alpha^4$ & $-\alpha^3$ & $\frac{2\alpha^2}{\epsilon_1}-\epsilon_1 \alpha^5$ & $\frac{4\alpha^2}{\epsilon_2}-\epsilon_2 \alpha^5$ & $-\alpha^4$ \\
  \multicolumn{4}{c}{} & $($ & $\frac{4\alpha}{\epsilon_1}-\epsilon_1 \alpha^4$ & $\epsilon_2 \alpha^4$ & $-\alpha^3$ & $\frac{2\alpha^2}{\epsilon_1}-\epsilon_1 \alpha^5$ & $\epsilon_2 \alpha^5$ & $-\alpha^4$ \\
  \multicolumn{4}{c}{} & $($ & $\epsilon_1 \alpha^4$ & $\frac{4\alpha}{\epsilon_2}-\epsilon_2 \alpha^4$ & $-\alpha^3$ & $-\frac{2\alpha^2}{\epsilon_1}+\epsilon_1 \alpha^5$ & $\frac{4\alpha^2}{\epsilon_2}-\epsilon_2 \alpha^5$ & $-\alpha^4$ \\
  \multicolumn{4}{c}{} & $($ & $\epsilon_1 \alpha^4$ & $\epsilon_2 \alpha^4$ & $-3\alpha^3$ & $-\frac{2\alpha^2}{\epsilon_1}+\epsilon_1 \alpha^5$ & $\epsilon_2 \alpha^5$ & $-3\alpha^4$ \\ \midrule
  \multicolumn{4}{c}{\rreset{i_1}{+\frac{2\alpha^2}{\epsilon_1}-\epsilon_1\alpha^5}} & $($ & $\frac{2\alpha^2}{\epsilon_1}-\epsilon_1 \alpha^5$ & $\frac{4\alpha^2}{\epsilon_2}-\epsilon_2 \alpha^5$ & $-\alpha^4$ & $0$ & $\frac{4\alpha^5}{\epsilon_2}-\epsilon_2 \alpha^8$ & $-\alpha^7$ \\
  \multicolumn{4}{c}{} & $($ & $\frac{2\alpha^2}{\epsilon_1}-\epsilon_1 \alpha^5$ & $\epsilon_2 \alpha^5$ & $-\alpha^4$ & $0$ & $\epsilon_2 \alpha^8$ & $-\alpha^7$ \\
  \multicolumn{4}{c}{} & $($ & $-\frac{2\alpha^2}{\epsilon_1}+\epsilon_1 \alpha^5$ & $\frac{4\alpha^2}{\epsilon_2}-\epsilon_2 \alpha^5$ & $-\alpha^4$ & $0$ & $\frac{4\alpha^5}{\epsilon_2}-\epsilon_2 \alpha^8$ & $-\alpha^7$ \\
  \multicolumn{4}{c}{} & $($ & $-\frac{2\alpha^2}{\epsilon_1}+\epsilon_1 \alpha^5$ & $\epsilon_2 \alpha^5$ & $-3\alpha^4$ & $0$ & $\epsilon_2 \alpha^8$ & $-3\alpha^7$ \\ \midrule
  $0$ & $-\frac{2\alpha^6}{\epsilon_1}$ & $0$ & $1$ & $($ & $0$ & $\frac{4\alpha^5}{\epsilon_2}-\epsilon_2 \alpha^8$ & $-\alpha^7$ & $0$ & $\frac{2\alpha^6}{\epsilon_2}-\epsilon_2 \alpha^9$ & $-\alpha^8$ \\
  \multicolumn{4}{c}{} & $($ & $0$ & $\epsilon_2 \alpha^8$ & $-\alpha^7$ & $0$ & $-\frac{2\alpha^6}{\epsilon_2}+\epsilon_2 \alpha^9$ & $-\alpha^8$ \\
  \multicolumn{4}{c}{} & $($ & $0$ & $\frac{4\alpha^5}{\epsilon_2}-\epsilon_2 \alpha^8$ & $-\alpha^7$ & $0$ & $\frac{2\alpha^6}{\epsilon_2}-\epsilon_2 \alpha^9$ & $-\alpha^8$ \\
  \multicolumn{4}{c}{} & $($ & $0$ & $\epsilon_2 \alpha^8$ & $-3\alpha^7$ & $0$ & $-\frac{2\alpha^6}{\epsilon_2}+\epsilon_2 \alpha^9$ & $-3\alpha^8$ \\ \midrule
  \multicolumn{4}{c}{\rreset{i_2}{+\frac{2\alpha^6}{\epsilon_2}-\epsilon_2\alpha^9}} & $($ & $0$ & $\frac{2\alpha^6}{\epsilon_2}-\epsilon_2 \alpha^9$ & $-\alpha^8$ & $0$ & $0$ & $-\alpha^{11}$ \\
  \multicolumn{4}{c}{} & $($ & $0$ & $-\frac{2\alpha^6}{\epsilon_2}+\epsilon_2 \alpha^9$ & $-\alpha^8$ & $0$ & $0$ & $-\alpha^{11}$ \\
  \multicolumn{4}{c}{} & $($ & $0$ & $\frac{2\alpha^6}{\epsilon_2}-\epsilon_2 \alpha^9$ & $-\alpha^8$ & $0$ & $0$ & $-\alpha^{11}$ \\
  \multicolumn{4}{c}{} & $($ & $0$ & $-\frac{2\alpha^6}{\epsilon_2}+\epsilon_2 \alpha^9$ & $-3\alpha^8$ & $0$ & $0$ & $-3\alpha^{11}$ \\ \midrule
  $0$ & $0$ & $2\alpha^{12}$ & $1$ & $($ & $0$ & $0$ & $-\alpha^{11}$ & $0$ & $0$ & $\alpha^{12}$ \\
  \multicolumn{4}{c}{} & $($ & $0$ & $0$ & $-\alpha^{11}$ & $0$ & $0$ & $\alpha^{12}$ \\
  \multicolumn{4}{c}{} & $($ & $0$ & $0$ & $-\alpha^{11}$ & $0$ & $0$ & $\alpha^{12}$ \\
  \multicolumn{4}{c}{} & $($ & $0$ & $0$ & $-3\alpha^{11}$ & $0$ & $0$ & $-\alpha^{12}$ \\ \midrule
  \multicolumn{11}{c}{Return $(\epsilon_3 = \alpha^{12})$.} \\
  \bottomrule
\end{tabular}
\end{table}

The training data that executes this plan is given in \hyperref[tab:rdnand]{Table~\ref{tab:rdnand}}. Note that the training examples with entries $(+\frac{4}{\epsilon_1}, 0, 0, +1)$ and $(0, +\frac{4\alpha}{\epsilon_2}, 0, +1)$ only have the listed effect due to our bounds on $\alpha$. In particular, one possible value of $(w \cdot x)$ is:
\[
  +\frac{4\alpha}{\epsilon_2} \cdot \epsilon_2 \alpha^3 = 4 \alpha^4
\]
which is only greater than $+1$ due to our bounds on $\alpha$.

\subsection{Implementation of \rinputF{i_1}{\epsilon_1}}

At a high level, the idea behind this implementation is as follows. We have three possible states. Our first training example only triggers on the nonnegative cases, while our second training example triggers on the negative case. The difference between these two updates is designed so that the negative case and zero case map to the same value. After that, we finish by performing a translation so that the cases fall into the form $\pm \epsilon_1'$.

\begin{table}
\caption{Training data for \rinputF{i_1}{\epsilon_1}.}
\label{tab:rinputF}
\centering
\begin{tabular}{@{}ccr@{}r@{$) \to ($}r@{$)$}@{}}
  \toprule
  $x_{i_1}$ & $y$ & \multicolumn{3}{c}{Effect on $(w_{i_1})$} \\ \midrule
  $\left(-\frac{1}{\epsilon_1}-\epsilon_1 \alpha\right)$ & $1$ & $($ & $-\epsilon_1$ & $-\epsilon_1 \alpha$ \\
  \multicolumn{2}{c}{} & $($ & $0$ & $-\frac{1}{\epsilon_1}-\epsilon_1 \alpha$ \\
  \multicolumn{2}{c}{} & $($ & $\epsilon_1$ & $-\frac{1}{\epsilon_1}\hphantom{{}-\epsilon_1 \alpha}$ \\ \midrule
  $-\frac{\alpha}{\epsilon_1}$ & $1$ & $($ & $-\epsilon_1 \alpha$ & $-\frac{\alpha}{\epsilon_1}-\epsilon_1 \alpha^2$ \\
  \multicolumn{2}{c}{} & $($ & $-\frac{1}{\epsilon_1}-\epsilon_1 \alpha$ & $-\frac{\alpha}{\epsilon_1}-\epsilon_1 \alpha^2$ \\
  \multicolumn{2}{c}{} & $($ & $-\frac{1}{\epsilon_1}\hphantom{{}-\epsilon_1 \alpha}$ & $-\frac{\alpha}{\epsilon_1}\hphantom{{}-\epsilon_1 \alpha^2}$ \\ \midrule
  $\frac{\alpha}{\epsilon_1}+\frac{\epsilon_1\alpha^3}{2}$ & $1$ & $($ & $-\frac{\alpha}{\epsilon_1}-\epsilon_1 \alpha^2$ & $-\frac{\epsilon_1\alpha^3}{2}$ \\
  \multicolumn{2}{c}{} & $($ & $-\frac{\alpha}{\epsilon_1}-\epsilon_1 \alpha^2$ & $-\frac{\epsilon_1\alpha^3}{2}$ \\
  \multicolumn{2}{c}{} & $($ & $-\frac{\alpha}{\epsilon_1}\hphantom{{}-\epsilon_1 \alpha^2}$ & $\frac{\epsilon_1\alpha^3}{2}$ \\ \midrule
  \multicolumn{5}{c}{Return $(\epsilon_1' = \frac{\epsilon_1\alpha^3}{2})$.} \\
  \bottomrule
\end{tabular}
\end{table}

The training data that executes this plan is given in \hyperref[tab:rinputF]{Table~\ref{tab:rinputF}}. Note that although the returned $\epsilon_1'$ is not a power of $\alpha$, we can use two additional coordinates and the following sequence of API calls to provide such a guarantee:
\begin{itemize}
  \item \rinputF{i_1}{\epsilon_1}, which returns $(\epsilon_1')$
  \item \rcopy{i_1}{i_2}{i_3}{\epsilon_1'}, which returns $(\epsilon_2, \epsilon_3)$
  \item \rreset{i_3}{\epsilon_3}
  \item \rcopy{i_2}{i_1}{i_3}{\epsilon_2}, which returns $(\epsilon_1'', \epsilon_3')$
  \item \rreset{i_3}{\epsilon_3'}
\end{itemize}
Of course, we need to remember to decrease the various $\epsilon$ parameters while other operations are running, to account for weight decay.

\subsection{Implementation of \rsetiftrue{i_1}{i_2}{\epsilon_1}}

At a high level, we mimic the implementation of \rinputF{i_1}{\epsilon_1}, but piggyback on a threshold check to read the first weight.

\begin{table}
\caption{Training data for \rsetiftrue{i_1}{i_2}{\epsilon_1}.}
\label{tab:rsetiftrue}
\centering
\begin{tabular}{@{}cccr@{}r@{, }r@{$) \to ($}r@{, }r@{$)$}@{}}
  \toprule
  $x_{i_1}$ & $x_{i_2}$ & $y$ & \multicolumn{5}{c}{Effect on $(w_{i_1}, w_{i_2})$} \\ \midrule
  $\left(-\frac{1}{\epsilon_1}-\epsilon_1 \alpha\right)$ & $1$ & $1$ & $($ & $-\epsilon_1$ & $0$ & $-\epsilon_1\alpha$ & $0$ \\
  \multicolumn{3}{c}{} & $($ & $\epsilon_1$ & $0$ & $-\frac{1}{\epsilon_1}\hphantom{{}-\epsilon_1\alpha}$ & $1$ \\ \midrule
  $-\frac{\alpha}{\epsilon_1}$ & $0$ & $1$ & $($ & $-\epsilon_1\alpha$ & $0$ & $-\frac{\alpha}{\epsilon_1}-\epsilon_1\alpha^2$ & $0$ \\
  \multicolumn{3}{c}{} & $($ & $-\frac{1}{\epsilon_1}\hphantom{{}+\epsilon_1\alpha}$ & $1$ & $-\frac{\alpha}{\epsilon_1}\hphantom{{}-\epsilon_1\alpha^2}$ & $\alpha$ \\ \midrule
  $\frac{\alpha^2}{\epsilon_1}+\frac{\epsilon_1\alpha^3}{2}$ & $0$ & $1$ & $($ & $-\frac{\alpha}{\epsilon_1}-\epsilon_1\alpha^2$ & $0$  & $-\frac{\epsilon_1\alpha^3}{2}$ & $0$ \\
  \multicolumn{3}{c}{} & $($ & $-\frac{\alpha}{\epsilon_1}\hphantom{{}-\epsilon_1\alpha^2}$ & $\alpha$ & $\frac{\epsilon_1\alpha^3}{2}$ & $\alpha^2$ \\ \midrule
  \multicolumn{8}{c}{Return $(\epsilon_1' = \frac{\epsilon_1 \alpha^3}{2}, \epsilon_2 = \alpha^2)$.} \\
  \bottomrule
\end{tabular}
\end{table}

The training data that executes this plan is given in \hyperref[tab:rsetiftrue]{Table~\ref{tab:rsetiftrue}}. Again, the returned $\epsilon_1'$ is not a power of $\alpha$, but  we can correct this with two additional coordinates and copying around values, as before.

\section{Proof Extensions for Additional Models}
\label{apx:additional}

In this appendix, we show how to extend our proofs to work for two additional, more complex models. In the first (easier) model, we consider a network with a single dense layer followed by a ReLU activation (dense-ReLU); the output of this network is compared against the training output using squared loss. In the second (harder) model, we consider a network with a dense layer followed by a ReLU activation followed by another dense layer (dense-ReLU-dense); the output of this network is also evaluated against the training output using squared loss.

\subsection{Dense-ReLU under Squared Loss}

Written in terms of the training example and weights, our network has the following loss function (note that we only have a single hidden node).
\[
\ell_{DR}(\bfw^t,(\bfx^t,y^t))= (y^t-\sigma(\bfw^t\cdot \bfx^t))^2
\]
where $\sigma(\cdot)$ is the coordinate-wise ReLU activation. At a fixed iteration, on a given example, the partial derivative\footnote{Notice that the derivative of $\sigma(0)$ is undefined, so our gadgets never result in a zero input to the ReLU activation unit.} with respect to the one weight $w_i$ at that step is:
\begin{align*}
  \frac{\partial \ell_{DR}(\bfw, \bfx, y)}{\partial w_i}
  &=
  \begin{cases}
    2(\bfw \cdot \bfx-y)x_i & \text{if } \bfw \cdot \bfx > 0 \\
    0 & \text{if } \bfw \cdot \bfx <0
  \end{cases}
\end{align*}

\begin{theorem}
  There is a reduction which, given a circuit $\C$ and a target binary string $s^*$, produces a set of training examples for OGD (where the updates are based on the $\ell_{DR}$ loss function) such that repeated application of $\C$ to the all-false string eventually produces the string $s^*$ if and only if OGD beginning with the all-zeroes weight vector and repeatedly fed this set of training examples (in the same order)  eventually produces a weight vector $\bfw^t$ with positive first coordinate.
\end{theorem}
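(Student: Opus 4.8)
The plan is to observe that the reduction of \hyperref[th:main]{Theorem~\ref{th:main}} is entirely \emph{model-agnostic}: the five-phase construction of Section~\ref{sec:reduction} is phrased purely in terms of the six API gadgets of \hyperref[tab:public-api]{Table~\ref{tab:public-api}}, and its correctness argument uses only their input/output specifications. Hence to prove the theorem it suffices to re-implement \reset{i_1}, \fnot{i_1}, \cpy{i_1}{i_2}, \dnand{i_1}{i_2}{i_3}, \inputF{i_1}, and \setiftrue{i_1}{i_2} as sequences of training examples whose OGD updates under the $\ell_{DR}$ loss realize the same specifications; the full reduction and its proof then carry over verbatim. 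As in Section~\ref{sec:reduction} I would first establish everything under the simplifying assumptions ($\eta=1$, no bias, no regularizer), keeping the encoding true/false/unset $=+1/-1/0$, and then recover the general case by the arguments of Section~\ref{sec:extensions}.

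The key new primitive is a training example supported on a single coordinate, $\bfx = c\,e_{i}$ with label $y$. Then $\bfw\cdot\bfx = c\,w_i$, the update is \emph{active} precisely when $c\,w_i>0$, and in that case
\[
w_i \;\longmapsto\; w_i\,(1-2\eta c^2) + 2\eta y c ,
\]
while nothing happens when $c\,w_i<0$. This sign-gated \emph{affine} map is the workhorse: the sign of $c$ selects which Boolean value (true, i.e. $w_i>0$, or false, i.e. $w_i<0$) triggers the update, the multiplicative factor $1-2\eta c^2$ lets me rescale, and the additive term $2\eta y c$ plays the role of the SVM ``add trick.'' Because the update can both scale and shift (rather than only shift, as in the hinge case), gadgets such as \reset{i_1} and \fnot{i_1} become, if anything, easier: I can push a chosen sign of $w_i$ to a target value in one active step while leaving the other sign untouched, then handle the remaining sign with a second, oppositely-oriented example.

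For the read/write gadgets \cpy{i_1}{i_2} and \dnand{i_1}{i_2}{i_3} I would use examples that are nonzero on both the coordinate(s) being read and the coordinate being written. There the shared error factor $-2\eta(\bfw\cdot\bfx-y)$ multiplies each $x_j$, so steering the \emph{sign} of $\bfw\cdot\bfx$ (hence of that factor) via the signs of the read weights lets me write the intended value into the target coordinate, at the cost of corrupting the inputs --- exactly the destructive behavior already permitted by \dnand{i_1}{i_2}{i_3}. For NAND I would reuse the arithmetic idea of the hinge construction: after suitable offsets, the sum of the two input weights distinguishes the both-true case from the other three, and I would exploit the ReLU kink at $\bfw\cdot\bfx=0$ (rather than the hinge threshold at $1$) as the comparator, finishing with single-coordinate resets and an add trick to restore clean $\pm1$ outputs.

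The main obstacle is the kink itself: the gradient of $\sigma$ is undefined at $0$, and, worse, an \emph{unset} weight (value $0$) read by a single-coordinate example produces exactly $\bfw\cdot\bfx=0$. Every gadget must therefore be designed so that no training example ever induces a zero dot product on any reachable input state --- e.g. by carrying a fixed nonzero auxiliary offset coordinate into each read, or by choosing magnitudes that keep all relevant dot products bounded away from $0$. Verifying, gadget by gadget and state by state, that (i) each dot product has the intended sign and is never $0$, and (ii) the resulting affine update lands on the intended value, is the bulk of the technical work and is the analog of the case analyses behind the SVM tables; \dnand{i_1}{i_2}{i_3} is the most delicate, since its multi-coordinate updates couple all three weights simultaneously. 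Finally, since polynomial-size data of polynomial bit complexity is required, I would use rational $c$ (so the multiplicative factors are rational) and, if the scaling cannot be made to vanish in a single rational step, track the evolving magnitude of each stored bit exactly as in the regularized construction of Appendix~\ref{sec:regularizer}.
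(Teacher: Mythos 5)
Your proposal matches the paper's proof essentially exactly: the paper also reuses the reduction of \hyperref[th:main]{Theorem~\ref{th:main}} verbatim and only re-implements the API gadgets for the $\ell_{DR}$ update rule, and it resolves the ReLU-kink/zero-dot-product issue precisely as you anticipate, by adding an auxiliary coordinate $\specq$ held at $+1$ that is passed as an extra argument to \texttt{destructive\_nand} and \texttt{set\_false\_if\_unset} so no reachable state yields $\bfw\cdot\bfx=0$. The only content you leave unexecuted is the explicit choice of rational magnitudes in the gadget tables (the paper's versions do restore clean $\pm 1$ values exactly, so no magnitude-tracking à la the regularized construction turns out to be needed), but your sign-gated affine update formula and the gadget-by-gadget case analysis you describe are exactly the verification the paper carries out.
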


\newcommand{\specq}{\Join}

The proof is the same as that of \hyperref[th:main]{Theorem~\ref{th:main}}, except we use the modified API found in \hyperref[tab:public-api-sq1]{Table~\ref{tab:public-api-sq1}}. As a consequence of using this modified API, we keep an additional special coordinate, $\specq$, denoting the fourth coordinate whose weight is $+1$ in between calls to our API. When we invoke \texttt{destructive\_nand} or \texttt{set\_false\_if\_unset}, we pass the fourth or second argument, respectively, to be $\specq$.

\begin{table}
\caption{Modified API for Dense-ReLU under Squared Loss.}
\label{tab:public-api-sq1}
\centering

\end{table}

\end{document}